\pgfplotsset{compat=newest}
\theoremstyle{definition}
\theoremstyle{remark}
\begin{document}
\title{Purchase and Production Optimization in a Meat Processing Plant}

\author[M. Vlk, P. Šůcha, J. Rudy and R. Idzikowski]{Marek Vlk\affmark{a},  Přemysl Šůcha\affmark{a,$\ast$}, Jarosław Rudy\affmark{b} and Radosław Idzikowski\affmark{b}}

\affil{\affmark{a}Czech Technical University in Prague, Czech Institute of Informatics, Robotics and Cybernetics, Jugoslávských partyzánů 1580/3, 160 00 Prague 6 and Postcode, Czech Republic}
\affil{\affmark{b}Wrocław University of Science and Technology, Department of Control Systems and Mechatronics, Wybrzeże Wyspiańskiego 27, 50-370 Wrocław, Poland}
\email{Marek.Vlk@cvut.cz [M. Vlk]; Premysl.Sucha@cvut.cz [P. Šůcha];\\ jaroslaw.rudy@pwr.edu.pl [J. Rudy]; radoslaw.idzikowski@pwr.edu.pl[R. Idzikowski]}


\begin{abstract}
The food production industry, especially the meat production sector, faces many challenges that have even escalated due to the recent outbreak of the energy crisis in the European Union. Therefore, efficient use of input materials is an essential aspect affecting the profit of such companies. This paper addresses an optimization problem concerning the purchase and subsequent material processing we solved for a meat processing company. Unlike the majority of existing papers, we do not concentrate on how this problem concerns supply chain management, but we focus purely on the production stage. The problem involves the concept of alternative ways of material processing, stock of material with different expiration dates, and extra constraints widely neglected in the current literature, namely, the minimum order quantity and the minimum percentage in alternatives. We prove that each of these two constraints makes the problem \mbox{$\mathcal{NP}$-hard}, and hence we design a simple iterative approach based on integer linear programming that allows us to solve real-life instances even using an open-source integer linear programming solver. Another advantage of this approach is that it mitigates numerical issues, caused by the extensive range of data values, we experienced with a commercial solver. The results obtained using real data from the meat processing company showed that our algorithm can find the optimum solution in a few seconds for all considered use cases.
\end{abstract}

\thanks{\affmark{$\ast$}Author to whom all correspondence should be addressed (e-mail: Premysl.Sucha@cvut.cz).}
\historydate{Received DD MMMM YYYY; received in revised form DD MMMM YYYY; accepted DD MMMM YYYY}



\keywords{production; meat processing; alternatives; purchasing; problem complexity; integer linear programming}








\maketitle

\section{Introduction}


The food industry has been recently subjected to several challenges, such as the COVID-19 pandemic, CO$_2$ and other greenhouse gas emissions restraints~\citep{KHAN2024106707}, surging inflation, and energy prices. This, together with requirements regarding sustainability and lack of manpower, has brought great tension to the entire industry.
Meat production is a perfect example in the food industry that is affected by all the above-mentioned aspects. In addition, this sector is highly competitive, which motivates companies to reduce costs and optimize their processes. One of the options to face it is to optimize the use of materials (livestock, raw meat, etc.). This includes not only purchases of input raw materials but also the way materials are further processed in accordance with production demand. Domain-specific aspects, like several possible ways input raw materials can be transformed into different materials/products, make the optimization process hard, and thus, it cannot be done manually.

\textcolor{black}{In cooperation with a larger meat processing company, MÚÚÚ Písek, we address an integrated \textit{purchase and production optimization problem} (PPOP), intending to develop a~solution that can be applied to the existing production process in that company.} The decisions to be made include how many hogs to slaughter, how to cut and further process the carcasses, how much material to use from or to put in stock, and what to buy from other suppliers as already processed materials.
\textcolor{black}{
Meat stocks may be chilled or frozen, and, in addition, there may be multiple batches for each material with different expiration dates.}
Materials are always bought in bulks large enough to make it worthy for the suppliers to make the delivery. This means that the quantity of each material to be bought must be of the given required minimum value. This requirement is referred to as the \textit{minimum order quantity} (MOQ). Furthermore, the production process is described by a~large number of \textit{recipes} specifying the way livestock and raw meat are transformed into final products (in our case, we assumed 246 actively used recipes defined for 507 material types). There may be multiple ways to produce one product by using different sets of recipes. Moreover, a~recipe allows for defining groups of alternative materials that are mutually interchangeable. Additionally, if some material from a group of alternative materials is used in the recipe, it must be used in the given minimum relative quantity due to technological limitations. This limitation is because industrial production \textit{a priori} relies on large volumes of materials. Working with small quantities could cause problems, for example, in weighing, and would unnecessarily complicate the production process. We refer to this requirement as the \textit{minimum percentage in alternatives} (MPA).

\textcolor{black}{As will be seen from the literature review and its summary in Table~\ref{tab:literature}, the existing papers primarily concentrate on supply chains and do not account for many specific production properties of PPOP encountered in practice.
Hence, we limit our focus purely on PPOP as a single-stage problem. We investigate} its time complexity and show that each of the two requirements, MOQ and MPA, make the problem \mbox{$\mathcal{NP}$-hard}. 
Therefore, we design a model for the problem in the Linear Programming (LP) formalism, which omits the MOQ and MPA requirements, and a model using the Integer Linear Programming (ILP), which handles the problem with the MOQ and MPA requirements.
\textcolor{black}{The batches of a single material with different expiration dates are modeled using a piecewise linear objective function which allows a very compact ILP formulation.}
ILP formulation is a traditional approach to solving similar problems in the existing literature, as we show in Section~\ref{sec:related}. However, it turned out that ILP suffers from numerical instability when applied to real production data \textcolor{black}{(described later in Section~\ref{sec:scalability_demand})}. 
The instability is caused by the model size as well as the parameters of the recipes, ranging from decagrams to tons regarding material quantities.  
Hence, we propose a simple iterative approach where the \textcolor{black}{MOQ and MPA-related constraints} are added into the LP model only when necessary. \textcolor{black}{This strategy allows us to solve the problem using a non-commercial ILP solver and mitigate the issue with numerical stability.} The resulting solving method was tested together with the meat processing company, and it is currently deployed in the test operation as a part of their Enterprise Resource Planning (ERP) software.

The contributions of the paper are the following:
\begin{enumerate}[(i)]
    \item A mathematical model for the real-life problem, completed with new constraints, such as MOQ and alternatives with MPA.
    \item A complexity proof showing that the problem with either MOQ or MPA requirement itself is \mbox{$\mathcal{NP}$-complete} even when MOQ and MPA, respectively, is a constant equal for all materials.
    \item \textcolor{black}{A simple approach based on iterative constraint generation, which is suitable for large instances causing numerical instability and not requiring the use of commercial ILP solvers}.
    \item An experimental evaluation of our approach on real-life data from the meat processing company, including the scalability and the analysis of the impact of objective weights, the required number of hogs to slaughter, and MOQ.
\end{enumerate}

\textcolor{black}{The remainder of this paper is organized as follows. Section~\ref{sec:related} presents an~overview of the related literature. Section~\ref{sec:problemstatement} describes the optimization problem under consideration. The proposed solving method is presented in Section~\ref{sec:matmodeldesign}. The complexity of the problem is addressed in Section~\ref{sec:complexity}. Computational experiments and their analysis are given in Section~\ref{sec:experiments}.
Finally, Section~\ref{sec:conclusion} presents the conclusions of the paper.}

\section{Related Work}\label{sec:related}

The food industry is a vast field ranging from yogurt production~\citep[e.g.,][]{KOPANOS20112929} to beer production~\citep[e.g.,][]{BALDO201458}.
There is significant diversity among production processes in this field, with notable distinctions between them.
For instance, in meat production, a unique characteristic is the ability to derive several products from a single material (e.g., through the cutting of meat). This stands in contrast to other industries typically combining multiple materials to create a single product.
Another feature is that meat manufacturers produce many different products and assume alternatives in production processes, making production management challenging.
Due to this specificity, we will focus the literature review on meat production. 

The scientific literature regarding optimization in the meat industry is scarce, as indicated, for example, by \cite{mohebalizadehgashti2019multi}. Nonetheless, several approaches exist, and we will describe them briefly while showing that the problem we are considering in this paper is significantly different from existing works. This section is divided into four parts, describing (i) papers that are the most relevant to our (deterministic) approach, (ii) papers concerning meat production in more general supply-chain problems, (iii) papers describing meat production and supply chains under uncertainty, and (iv) papers concerning the MOQ constraint.

\subsection{Meat Processing}
We will start by describing the most relevant approaches to the studied problem. \cite{wikborg2008online} proposed an offline model for a beef plant in Norway to decide which cutting pattern to apply on different carcasses with penalization for unsatisfied demand. The paper considered 1000 products and 400 types of cutting. An online model was also proposed, with minimization of maximal regret, but it was restricted to one carcass. Local search and Genetic Algorithm heuristics were employed as well. Next, \cite{reynisdottir2012linear} considered a meat production supply chain focused on three levels: 1) cutting hogs into parts (e.g., side or leg), 2) processing of parts into raw materials (e.g., raw ham), and 3) processing raw materials into products (e.g., Bayonne ham). The author considered the cost of holding inventory as well as the possibility of buying additional products on levels 1 and 3. The objective function was to maximize the value of produced goods while minimizing the resulting inventory and buying costs. The resulting ILP model was compared with the actual company data, but the results were limited due to multiple model simplifications and data discrepancies. 
A similar approach but with a more complex model was considered by \cite{albornoz2015optimization}. Additional resources and processes such as product freezing, regular and overtime working hours, as well as warehouse capacities, were considered. Moreover, various cutting patterns for several types of hog carcasses were included, but there was no option to buy products from vendors. \textcolor{black}{While frozen products were considered, the freezing was performed every time a given product was too old. As such, this approach does not allow to explicitly decide which products should be frozen.}
Experiments carried out in CPLEX assuming 17 cutting patterns and 40 products for 10-day planning showed an 8\% profit increase compared to the decision-making employed by the company.
This paper was later extended by \cite{rodriguezcoordination}, who used a static Stackelberg game to formulate a two-level LP. The game consisted of two players: the producer (who processes carcasses into products) and the retailer (who distributes and markets the products). The game proceeds in a sequential and uncooperative manner. First, the producer decides on its production quantity, which becomes known to the retailer in the next turn. If the products supplied by the producer do not cover the demand of the retailer, the retailer imposes a cost penalty on the producer in the subsequent turn. The game continues with both sides trying to maximize their objective functions. However, aside from the model formulation, the paper contained no experiment section, as the research was still ongoing at the time of publication.

To the best of our knowledge, the above four papers are the closest to the problem considered in this paper. However, they do not consider MPA or MOQ in any way. Moreover, only \cite{wikborg2008online} considered large-scale problem instances with hundreds of products. Apart from that, none of the papers study the complexity of the problem or the numerical stability of the ILP-based solutions.
Next, we will review the remaining papers that address deterministic meat- or food-related supply chains or similar problems.

\subsection{Meat Production Models Integrated with Supply chains}
\textcolor{black}{Meat processing is often solved within supply chains; however, those papers often simplify the meat production phase and concentrate on a larger scope. For example, }
a typical three-stage meat supply chain with multiple objectives was considered by \cite{mohammed2017fuzzy}. Four objectives were used: 1) production costs, 2) $\mathrm{CO}_2$ emission, 3) distribution time, and 4) delivery rate. The authors considered three solving methods: 1) LP-metrics, 2) the $\epsilon$-constraint, and 3) goal programming. \cite{mohammed2017multi} considered a similar three-tier meat production supply chain and defined an objective function composed of operational costs (including the cost of radio-frequency identification tags), customer satisfaction, and meat quality. Again, the problem is considered on a higher level (number of employees at each location, transport time, and distance) instead of a lower level (details of meat cutting and product recipes). \cite{blanco2020} considered a deterministic variant of the broiler chicken production problem. The approach is, again, different from ours as the author focuses mainly on the management of chicken growth (and the inventory and cost issues associated with it). The slaughterhouses are mentioned as the last stage, but no details, aside from the available storage and mortality rates, are considered. It should be noted, however, that, to the best of our knowledge, this paper is the only example of meat-related research that mentions MOQ as there is a minimum number of day-old chickens that should arrive at the farm each week. The author then proposes an ILP model and confronts it with a poultry supply chain in Colombia.
Broiler production planning in a supply chain is also studied by \cite{Brevik2020broiler}. The authors studied this process in a Norwegian broiler production company governing approximately 150 broiler farms. Similar to the previous work, the authors concentrate on the management of chicken growth while slaughtering is the last stage of the studied processes, and further meat processing is not assumed. The problem is formulated using ILP; nevertheless, due to the size of the problem, the authors propose two different rolling horizon heuristics. The authors show interesting sensitivity analyses providing insights regarding certain strategic decisions.
\cite{Arash2017fish} address the fish supply chain. The authors developed a methodology to help farmers make decisions on spawn purchase quantity, time to harvest fish from water, and farming periods. The problem is formulated as ILP and the objective is to maximize the total profit.
An example of a meat supply chain with multiple objectives, including costs and carbon emission, was considered by \cite{schmidt2022traceability}. The problem was modeled using ILP and solved using goal programming. However, instead of considering just a meat processing plant, the paper takes a more general approach, focusing on the flow of materials from slaughterhouses through processing plants to retailers. On the other hand, specific cutting patterns of carcasses are not considered.
\cite{koroteev2022optimization} considered the problem of budget optimization and searched for an optimal production scheme to increase the performance of a meat processing enterprise. The authors employ a Monte Carlo method and use real-life data to evaluate their solution. Nonetheless, again, no recipe alternatives or MOQ constraints were used. Moreover, much of the paper focuses on forecasts and the Monte Carlo method rather than the model itself.  On the other hand, the considered case study was large, with over 600 products, raw materials, and other items, as well as over 5000 operations encountered in the production process. A~similar approach for meat production was shown in the paper by \cite{https://doi.org/10.1111/itor.13014}. The authors considered optimization of managing beef herds for small-scale farmers in Czech Republic, with heifer acquisition and selection for breeding and fattening as the decision variables. Using 102 scenarios, the results showed the impact of heifer acquisition and subsidies size on the sustainability of business, indicating a~threshold under which the best decision is to leave the market. Finally, \cite{doi:10.1080/01605682.2019.1640588} considered a~lot sizing and scheduling problem with scarce resources and setup times for a~Brazilian meat company. As production resources are scarce, the solution has to decide which production lines have to be assembled, impacting the solution cost. \textcolor{black}{The approach also explicitly accounted for products perishability and incurred a~percentage loss when selling aged products.} The authors considered a~MIP formulation and a~reformulation into the Branch-and-Bound method and carried out extensive experiments on a~number of test cases.

\subsection{Meat Production Models Assuming Uncertainty}
We will now move on to the papers that consider meat production and supply chains under uncertainty (usually meaning uncertain demand). Since \textcolor{black}{the uncertain aspect was not crucial for the company, we will focus on the problems} and models that are studied by the papers in this area.

We will start with the paper by \cite{cooperative2005supply} as it appears the closest to our work. The authors consider the meat production process for the Norwegian Meat Cooperative with four stages: 1) slaughtering, 2) cutting, 3) processing, and 4) distribution/sales. The paper describes both cutting patterns in the cutting stage and recipes in the processing stage. Most importantly, it is mentioned that a product can have multiple recipes (as many as 20). This allows one to choose the most cost-efficient recipe depending on the current costs of ingredients as well as geographical region. This 'multiple recipes per product' can be viewed as a sort of alternative; however, it does not seem to match the 'multiple ingredient types for a given recipe input' we are considering in this paper (see recipe 3 in Figure~\ref{fig:example}). Finally, the paper by Tomasgard and Høeg does not consider MOQ. A similar case study in the paper by~\cite{schutz2009supply} addresses the Norwegian meat industry. The authors considered a stochastic supply chain formulation with the goal of minimizing annual costs. A four-stage production process, including cutting and processing stages, is formulated with nearly 300 recipes (including the cutting stage) and over 100 final products. It should also be noted that this is one of the rare papers that explicitly mentions the concept of bills of materials. The proposed mathematical formulation is more closely related to our approach as recipes and bills of materials are included in it. On the other hand, the model also considers locations and transport times but does not consider MOQ or recipe alternatives. A different approach was employed by \cite{rijpkema2016application}, where the authors studied the allocation of animals to slaughterhouses in a meat supply chain for a large European meat processor. The research aimed to reduce the quality uncertainty at the slaughterhouses compared to the approach currently employed at the company, which was based on minimizing transportation time between slaughterhouses. The proposed ILP considers, again, a more general approach with no emphasis on details of cutting and processing of meat. A job assignment problem for the chicken parts cutting process was considered by \cite{leksakul2019fuzzifying}. The authors proposed an ILP formulation to maximize the revenue over the weekly planning horizon. The results brought a significant increase in revenue and throughput while also reducing backorders. While the paper considers a few special cutting types, it mostly disregards the other production stages and aims to employ fuzzification to handle uncertain demand. \textcolor{black}{\cite{arabsheybani2024sustainable} proposed a~robust optimization approach for a~multi-item multi-period and multi-tier livestock supply chain. In search for improved sustainability, the authors considered both different ages of products in stock and possibility of choosing how much of the product was frozen. Similarly, \cite{FATHOLLAHZADEH2024110578} considered multi-objective optimization for a~green meat supply chain under uncertainty with waste and by-products utilization. The paper modeled a~supply chain where meat products can be used in one of three ways: fresh, frozen and processed. This allows to choose if a~product will be frozen, even if it is somewhat different from our notion of chilling and freezing recipes.} Finally, in their Master thesis, \cite{mohebalizadehgashti2019multi} considered a multi-objective ILP formulation for a multi-product, multi-tier green meat supply chain with a case study in Canada. The approach, however, is focused more on the general problem of the flow of materials between various locations (farms, abattoirs, retailers, and customers) and not on considering specific production constraints relevant to our research. However, the author considers the perishability, shelf life, and freshness of the products. The paper also contains a broad review of supply chain works for various production types, showing that meat-related (or even food-related) supply chain papers are a rarity.

\subsection{Models considering MOQ Constraint}
As we have seen from the above papers, MOQ is rarely considered in meat production optimization. However, such a constraint is used in several papers addressing general supply chains and lot-sizing. We will very briefly mention several such papers here. For the supply chain problems, \cite{das2003modeling} use MOQ as one of the parameters they considered in their research concerning the flexibility of supply chains. A similar paper by \cite{kesen2010evaluating} considered supply chain flexibility in the presence of order quantity constraint, including MOQ. Regarding lot-sizing problems, \cite{lee2004inventory} considered a dynamic lot-sizing with MOQ and presented a polynomial algorithm for the problem. \cite{hellion2012polynomial} consider MOQ as one of the constraints in their single-item capacitated lot-sizing problem and provide a polynomial time algorithm. A similar paper by \cite{li2016polynomial} considered polynomial solvability and optimality conditions for several lot-sizing problem variants, including MOQ. The authors also included several examples for each case.

\subsection{Summary}
To summarize, the problem of optimizing the production process focused on meat processing has seen limited attention in the literature. While some approaches exist (four papers), they do not account for many specific production properties encountered in practice. Moreover, existing papers often focus on supply chains and do not consider cutting and meat processing in detail. Most importantly, only one paper considered the MOQ constraint in the context of the meat supply chain, though it did not focus on meat processing details. Similarly, only one paper explicitly considered multiple recipes for a single final product, but even so, this concept significantly differs from our concept of alternatives. We conclude that, to the best of our knowledge, this paper is the first to consider both the MOQ constraint and product alternatives while also focusing on the details of the production process, e.g., the MPA constraint. In Table~\ref{tab:literature}, we show the comparison between our paper and existing approaches. A smaller-size checkmark is used when \textcolor{black}{the aspect is not fully covered or the situation is not clear} (e.g., the concept is not named or defined explicitly but seems to be implied or partially similar to ours).

\begin{table*}
\centering
\footnotesize
\caption{Comparison of the relevance of existing literature approaches to our work}
\label{tab:literature}
\begin{tabular}{cccccccc}
    \toprule
     Paper & \rotatebox{90}{Meat} & \rotatebox{90}{Cutting}\rotatebox{90}{patterns} & \rotatebox{90}{Recipes} & \rotatebox{90}{MOQ} & \rotatebox{90}{Groups of}\rotatebox{90}{alternatives} & \rotatebox{90}{Expiration}\rotatebox{90}{date} & \rotatebox{90}{Chilled and}\rotatebox{90}{and frozen}\rotatebox{90}{material}\rotatebox{90}{on stock} \\
    \midrule
    \cite{das2003modeling} & & & & {\large\checkmark} & & & \\
    \cite{lee2004inventory} & & & & {\large\checkmark} & & & \\
    \cite{cooperative2005supply} & {\large\checkmark} & {\large\checkmark} & {\tiny\checkmark} & & {\tiny\checkmark} & & \\
    \cite{wikborg2008online} & {\large\checkmark} & {\large\checkmark} & {\tiny\checkmark} & & & & \\
    \cite{schutz2009supply} & {\large\checkmark} & & {\large\checkmark} & & & & \\
    \cite{kesen2010evaluating} & & & & {\large\checkmark} & & & \\
    \cite{reynisdottir2012linear} & {\large\checkmark} & {\large\checkmark} & {\tiny\checkmark} & & & & \\
    \cite{hellion2012polynomial} & & & & & & & \\
    \cite{albornoz2015optimization} & {\large\checkmark} & {\large\checkmark} & {\tiny\checkmark} & & & {\tiny\checkmark} & {\tiny\checkmark} \\
    \cite{rijpkema2016application} & {\large\checkmark} & & & & & & \\
    \cite{li2016polynomial} & & & & {\large\checkmark} & & & \\
    \cite{mohammed2017fuzzy} & {\large\checkmark} & & & & & & \\
    \cite{mohammed2017multi} & {\large\checkmark} & & & & & & \\
    \cite{Arash2017fish} & {\large\checkmark} & & & {\large\checkmark} & & & \\
    \cite{rodriguezcoordination} & {\large\checkmark} & {\large\checkmark} & {\tiny\checkmark} & & & & \\
    \cite{leksakul2019fuzzifying} & {\large\checkmark} & {\large\checkmark} & {\tiny\checkmark} & & & & \\
    \cite{mohebalizadehgashti2019multi} & {\large\checkmark} & & & & & & \\
    \cite{blanco2020} & {\large\checkmark} & & & {\large\checkmark} & & & \\
    \cite{Brevik2020broiler} & {\large\checkmark} & & & {\large\checkmark} & & & \\
    \cite{doi:10.1080/01605682.2019.1640588} & {\large\checkmark} & & & & & & \\
    \cite{https://doi.org/10.1111/itor.13014} & {\large\checkmark} & & & & & & \\
    \cite{schmidt2022traceability} & {\large\checkmark} & & & & & & \\
    \cite{koroteev2022optimization} & {\large\checkmark} & {\tiny\checkmark} & {\tiny\checkmark} & & & {\tiny\checkmark} & \\
    \cite{arabsheybani2024sustainable} & {\large\checkmark} & & & & & {\large\checkmark} & {\tiny\checkmark} \\
    \cite{FATHOLLAHZADEH2024110578} & {\large\checkmark} & & & & & & {\tiny\checkmark} \\
    This work & {\large\checkmark} & {\large\checkmark} & {\large\checkmark} & {\large\checkmark} & {\large\checkmark} & {\large\checkmark} & {\large\checkmark} \\  
     \bottomrule
\end{tabular}
\end{table*}

\section{Problem Description}\label{sec:problemstatement}

The problem is given by a set of materials and recipes. Materials include all input materials, intermediary components, and final products. Regarding recipes, it should first be noted that a production process can be divided into two stages: cutting and production. In the cutting stage, a~single input item (e.g., a~pig carcass) usually yields multiple output items (side, leg, belly, etc.). Recipes in this stage are often called \textit{cutting patterns} in the literature\textcolor{black}{, and a similar structure of recipes can be observed also in the dairy industry}. The recipes in the production stage work mostly in the opposite direction: multiple input items are used to make a~single output item (e.g., a~specific type of sausage). Nevertheless, the recipes in the problem at hand are more complex in that they can transform multiple materials into multiple other materials. Therefore, in our formulation, we will model the recipes both in the cutting and production stages in the same way. As such, a~recipe specifies how the materials change in the meat processing, regardless of whether it is a cutting process or production. Hence, a recipe defines a set of input materials together with their quantities and a set of output materials with their quantities. In addition, \textcolor{black}{the meat processing company also defines} groups of alternative materials in a recipe, meaning that the materials in the same group of alternatives are mutually interchangeable but need to sum up to the required quantity. Note that a recipe may also be called a bill of materials or a piece list.

An illustrative instance with three recipes and six materials can be seen in Figure~\ref{fig:example}. The circles are materials, the squares are recipes. The arrows depict the flows of materials and their quantities specified by recipes. For example, recipe 1 specifies that from 1000 units of material 1, using this recipe, one can produce 50 units of material 2 and 900 units of material 3. There is also one group of alternatives, depicted by dashed lines. Hence, recipe 3 is defined by yielding material 6 in the quantity of 1400, for which it requires material 3 in the quantity of 1200 and an arbitrary mix of materials 4 and 5 in the total quantity of 300. Next, notice that material 3 is output from recipes 1 and 2. Hence, a material may be produced by multiple recipes. \textcolor{black}{The key in the bottom-right corner of the figure illustrated notation related to the PPOP that will be introduced in the next section.}



\begin{figure*}[tp]
\centering
\newcommand{\mat}[3] { 
 	\node[circle, draw, thick, minimum size=0.8cm] (m#1) at (#2,#3) {$#1$};
}
\newcommand{\matcs}[4] { 
 	\node[circle, draw, thick, minimum size=0.8cm] (mc#1) at (#2,#3) {$\widetilde{#1}$};
 	\node[font=\scriptsize] (d#1) at (#2+0.5,#3-0.75) {$h_{\widetilde{#1}}=#4$};
}
\newcommand{\kus}[3] { 
 	\node[draw, thick, minimum size=0.8cm] (k#1) at (#2,#3) {$#1$};
}

\begin{tikzpicture}[scale=0.55, draw=black, text=black]
\mat{1}{0}{0}
\kus{1}{-3}{-3}
\kus{2}{3}{-3}
\mat{2}{-4.5}{-6}
\mat{3}{-1.5}{-6}
\mat{4}{1.5}{-6}
\mat{5}{4.5}{-6}
\kus{3}{0}{-9}
\mat{6}{0}{-12}

\draw[-stealth, thick] (m1) -- node[anchor=east] {$1000$} (k1);
\draw[-stealth, thick] (m1) -- node[anchor=west] {$1000$} (k2);

\draw[-stealth, thick] (m3) -- node[anchor=east] {$1200$} (k3);
\draw[-stealth, dashed] (m4) -- (k3);
\draw[-stealth, dashed] (m5) -- (k3);

\draw[-stealth, thick] (k1) -- node[anchor=east] {$50$} (m2);
\draw[-stealth, thick] (k1) -- node[anchor=west] {$900$} (m3);
\draw[-stealth, thick] (k2) -- node[anchor=south east] {$400$~} (m3);
\draw[-stealth, thick] (k2) -- node[anchor=west] {$600$} (m4);

\draw[-stealth, thick] (k3) -- node[anchor=east] {$1400$} (m6);

\node[] (f) at (1.3,-7.6) {$300$};
\draw[dashed] (0.3,-7.8) to [controls=+(90:0) and +(90:1)] (1.2,-8.6);

\mat{i}{9}{-6}
\kus{j}{9}{-9}
\mat{i'}{9}{-12}
\draw[-stealth, thick] (mi) -- node[anchor=west] {$q^+_{j,index^+(i)}$} (kj);
\draw[-stealth, thick] (kj) -- node[anchor=west] {$q^-_{j,index^-(i')}$} (mi');

\end{tikzpicture}

\caption{Illustrative instance of the purchase and production optimization problem.}
\label{fig:example}
\end{figure*}

\textcolor{black}{The goal is to satisfy a~given demand for individual materials using recipes existing in the production system.} The materials can be acquired by buying them, taking them from stock, or producing them using recipes, and oppositely, materials are used to satisfy demand, to be put in stock, or to be processed further in production using recipes.
Each material is specified by its cost (price), demand, shelf life (expiration), turnover (quantity used or sold per given time frame), and quantity we already have in stock. Besides, bulks of the same material may be in stock with different shelf lives.
\textcolor{black}{In addition, the model assumed in this paper allows us to decide whether the material will be chilled or frozen when it goes to stock. This is defined via specific recipes allowing the conversion of a specific material to a frozen one. Another set of recipes is introduced for the reverse process. This allows us to define not only which material can or cannot be put into stock in which form but also defines the change of the material in terms of cost and weight (e.g., meat loses weight when it is frozen and unfrozen).}
Note that we define demands not only for final products but for all materials since the company also sells various intermediary components \textcolor{black}{including their fresh, chilled, and frozen variants}.

There are two extra requirements specified by the meat processing company. First, the MOQ requirement specifies that if a material is going to be bought, there is given a minimum quantity to be bought. MOQ can be different for each material.
Second, \textcolor{black}{by the MPA requirement the company specifies} that when some material from a group of alternatives is to be used, then the ratio of the quantity of this material to the total quantity of the materials from that group of alternatives must be at least 5\%. In the illustrative instance from Figure~\ref{fig:example}, the materials 4 and 5 must not be mixed in the ratio, say, 6 units of material 4 and 294 units of material 5 as the percentage of material 4 in the total quantity in the group of alternatives would be only 2\%.

The aim is to decide how much each recipe will be applied, how much of each material will be bought, and how much will remain to be put in stock while minimizing a compound objective function \textcolor{black}{assuming several aspects. The meat processing company defined five relevant objectives intending to have a choice of which one(s) would be used. The motivation is that the changing situation in the market requires adjustment of the criterion function, and the company also needs to analyze different buying strategies.} First of all, they want to minimize the total nominal cost of materials to be bought. Next, they want to minimize the nominal cost of materials that remain to be put in stock, with the intention of reducing inventory holding costs and maximizing freshness primarily of the more valuable materials. Furthermore, the materials with shorter shelf life are preferred not to be overproduced, and the materials that are already stored are preferred to be processed (from the oldest to the newest). Finally, materials with low stock turnover are preferred not to be overproduced. Note that we talk about overproduction as it is hardly possible to meet all the demands such that nothing is left to be put in stock. \textcolor{black}{The first two objectives are assumed to be the most important ones, and since they both represent a direct cost, they can be seen as a single objective. The last three are seen as auxiliary and give the company an option to modify its buying strategy. Therefore, the problem is seen as a single objective optimization problem, while the company can vary its form using a weighted linear function. A closer discussion regarding the specific choices of weights is discussed in Section~\ref{sec:weights}.}

\section{Mathematical Model}\label{sec:matmodeldesign}

In this section, we first describe the mathematical programming model to solve the PPOP. The first subsection describes the parameters and constraints; subsequently, we discuss the objective function. The third subsection describes requirements and related constraints that make the problem \mbox{$\mathcal{NP}$-hard}. The last subsection describes the approach that mitigates the numerical instability of the proposed ILP model.
The main notation is summarized in Table~\ref{tab:notation}. All numerical parameters of the problem and all introduced variables are non-negative real-valued unless explicitly stated differently.

\subsection{Parameters and Constraints}

The problem is given by a set of materials $M$ and a set of recipes $J$. For each material $i \in M$, we are given demand $d_i$, cost $c_i$, and the quantity we already have in stock $h_i$. Let us first define variables $b_i$ to represent the quantity of material $i \in M$ to buy, $s_i$ to represent the quantity of material $i \in M$ to be put in stock, and $z_j$ to represent how much to cut/produce by recipe $j \in J$. The following paragraphs describe individual types of constraints.

\begin{table*}[t]
\scriptsize
    \caption{Parameters and variables}
    \label{tab:notation}
    \centering
    \begin{tabular}{l|l}
    \toprule
        \textbf{Symbol} & \textbf{Meaning} \\
        \midrule 
\multicolumn{2}{c}{\bf{}Parameters} \\
\midrule 
$M$ & set of materials \\
$J$ & set of recipes \\
$d_i$ & demand for product $i \in M$ \\
$c_i$ & cost of material $i \in M$ \\
$h_i$ & how much material $i \in M$ we initially hold in stock \\
$b^{min}_i$ & MOQ of material $i \in M$ \\
$trn_i$ & stock turnover of material $i \in M$, specified by quantity used per period of time \\
$exp_i$ & shelf life of freshly produced material $i \in M$ \\
$m_{i,1}, \dots, m_{i,k_i}$ & 
list of quantities (batches) of material $i$ held in stock \\
$exp_{i,1}, \dots, exp_{i,k_i}$ &
list of remaining shelf lives of batches of material $i$ held in stock \\
$M^+_j$ & input materials of recipe $j$ \\
$Q^+_j$ & quantities of input materials of recipe $j$ \\
$M^-_j$ & output materials of recipe $j$ \\
$Q^-_j$ & quantities of output materials of recipe $j$ \\
$Alts$ & set of tuples $(j,A,q_{j,A})$, where $j$ is a recipe, $A$ is a group of alternatives, and $q_{j,A}$ is the quantity \\

        \midrule 
\multicolumn{2}{c}{\bf{}Decision variables} \\
\midrule 
$z_j$ & how much to cut/produce by recipe $j \in J$ \\
$\hat{z}_{j,A,i}$ & proportion of using material $i$ within group of alternatives $A$ in recipe $j$ \\
$b_i$ & quantity of material $i \in M$ to buy \\
$s_i^{new}$ & quantity of material $i \in M$ to be put in stock from newly produced \\
$s_i^{old}$ & quantity of material $i \in M$ to be put in stock from what was already in stock \\
        \midrule 
\multicolumn{2}{c}{\bf{}Auxiliary variables} \\
\midrule 
$s_i$ & quantity of material $i \in M$ to be put in stock in total \\
$p_i$ & production of material $i \in M$ in recipes \\
$u_i$ & usage of material $i \in M$ in recipes \\

\bottomrule
    \end{tabular}
\end{table*}

\subsubsection*{Material Flow Constraint}

Let us first define each recipe $j \in J$ by four lists: (i) input materials $M^+_j$, (ii) quantities of input materials $Q^+_j = \left( q^+_{j,1}, q^+_{j,2}, \dots, q^+_{j,{|M^+_j|}}\right)$, (iii) output materials $M^-_j$, and (iv) quantities of output materials $Q^-_j = \left( q^-_{j,1}, q^-_{j,2}, \dots, q^-_{j,{|M^-_j|}}\right)$.
Suppose $index^+(i)$ and $index^-(i)$ give the index of $i$ in list $M^+_j$ and $M^-_j$, respectively.

For each material $i \in M$, let the quantity of the material produced by using/applying the recipes be defined as follows:
\begin{equation}\label{constrProduction}
p_i = \sum_{j \in J | i \in M^-_j} z_j \cdot q^-_{j,index^-(i)} \end{equation}

The sum goes over all recipes that contain material $i$ on the output. Analogically, the usage of material $i$ for further processing, i.e., the quantity of material $i$ consumed/used by the recipes, could be defined as:
\begin{equation}\label{constrUsage}
u_i = \sum_{j \in J | i \in M^+_j} z_j \cdot q^+_{j,index^+(i)} \end{equation}

For example, in the instance from Figure~\ref{fig:example}, the usage of material 3 in recipes is $u_3 = 1200z_3$, and the production of material 3 in recipes is $p_3 = 900z_1 + 400z_2$.

For each material $i \in M$, the output and the input of the material must be equal. More precisely, it must hold that the quantity which goes to satisfy demand, plus the quantity to be put in stock, plus usage, must be equal to the quantity to be bought, plus the quantity from stock, plus the quantity produced by using the recipes. Hence, $\forall i \in M$:
\begin{equation}
d_i + s_i + u_i = b_i + h_i + p_i
\end{equation}

\subsubsection*{Alternatives}

Let us define $Alts$ as a set of tuples $(j,A,q_{j,A})$, where $j$ is a recipe, $A$ is a group of alternatives, i.e., a list of alternative (mutually interchangeable) materials, and $q_{j,A}$ is the total quantity of materials from group of alternatives $A$ defined in recipe $j$. For each tuple $(j,A,q_{j,A}) \in Alts$ and for each material $i \in A$, we need to introduce variable $\hat{z}_{j,A,i}$ that represents the proportion of using material $i$ within group of alternatives $A$ in recipe $j$. Then, we add the following constraint, $\forall (j,A,q_{j,A}) \in Alts$:
\begin{equation}\label{con:zalt}
z_j = \sum_{i\in A} \hat{z}_{j,A,i}
\end{equation}

Next, we extend equation~(\ref{constrUsage}), i.e., the usage of material $i$ in recipes, by adding a new expression for all occurrences of material $i$ in groups of alternatives:
\begin{equation}
u_i = \sum_{j \in J | i \in M^+_j} z_j \cdot q^+_{j,index^+(i)} + \sum_{(j,A,q_{j,A}) \in Alts | i \in A} \hat{z}_{j,A,i} \cdot q_{j,A} \end{equation}

Note that there may be recipes where one material can be present as required input as well as in the group of alternatives. That is, there may be recipe $j \in J$ such that material $i \in M$ occurs both in $M^+_j$ and $A$ for some $(j,A,q_{j,A})$ (even in multiple groups of alternatives within the same recipe). 

The model related to the instance from Figure~\ref{fig:example} defines variables $z_1, z_2, z_3$; one for each recipe. Recipe 3, for example, is defined as $M^+_j = \left(3\right)$, $Q^+_j = \left(1200\right)$, $M^-_j = \left(6\right)$, and $Q^-_j = \left(1400\right)$. There is one group of alternatives $A = \{4,5\}$, $q_{3,A} = 300$, and $Alts = \{(3, A, 300)\}$. There will also be variables $\hat{z}_{3,A,4}$ and $\hat{z}_{3,A,5}$, and constraint~(\ref{con:zalt}) yields equation $z_3 = \hat{z}_{3,A,4} + \hat{z}_{3,A,5}$.
The values can be set, for example, such that $z_3 = 1/2$, $\hat{z}_{3,A,4} = 1/3$, and $\hat{z}_{3,A,5} = 1/6$. In this case, material 3 would go to recipe 3 in the quantity of 600, material 6 would go from recipe 3 in the quantity of 700, and materials 4 and 5 would go to recipe 3 in the quantities of 100 and 50, respectively.

The company defines that a material can be bought only if it cannot be produced by any recipe, i.e., $\forall i \in M$, if $\{j \in J | i \in M^-_j\} \not= \emptyset$, then $b_i = 0$.
In the instance from Figure~\ref{fig:example}, only materials 1 and 5 can be bought.
When a material can be bought and produced, then it is represented as two different materials in set $M$.
In the same way, whether a bought or produced material is frozen or subsequently defrosted is distinguished by representing them as different materials.
Then a transition from a material to its frozen state is modeled using a freezing recipe and the transition from a frozen material to its defrosted state by an unfreezing recipe.
Whenever some of these materials are mutually interchangeable in a recipe (e.g., bought, produced, or defrosted), it is modeled as a group of alternatives.

\subsubsection*{Stock}

Since we aim to consider inventory costs as well as the perishability of materials, we need to distinguish what is to be put in stock from newly produced material and what is from the already stored material.
We assume that there may be multiple batches of each material in stock, differing in the expiration date. \textcolor{black}{The expiration date depends on the date when the material was produced, its type, and whether it is kept chilled or frozen. In addition, some materials can be transformed into frozen ones and back using recipes defined by the company.} Different expiration dates of the batches in stock are represented using the objective function described in the next subsection. Therefore, for now, we only distinguish between the newly produced material and the one already in stock.

We introduce variables, for each $i \in M$, $s_i^{new}$ to represent what is left from the newly produced material and $s_i^{old}$ to represent what is left from what we already had in stock. To ease the description, we also introduce variables $s_i$ to represent what is left in total, and hence, we add this equation for each $i \in M$:
\begin{equation}
s_i = s_i^{new} + s_i^{old}
\end{equation}

To make $s_i^{old}$ correctly reflect the remainder from what was already in stock (which is $h_i$), first, $s_i^{old}$ cannot be more than $h_i$, i.e., we add the following inequality, $\forall i \in M$:
\begin{equation}
s_i^{old} \le h_i
\end{equation}

Next, $s_i^{old}$ cannot be less than what was already in stock ($h_i$) minus what was applied on satisfying the demand ($d_i$) and used by recipes ($u_i$), i.e., we add the following inequality, $\forall i \in M$:
\begin{equation}
s_i^{old} \ge h_i - d_i - u_i
\end{equation}

Variable $s_i^{old}$ will be pushed to its minimum possible value (hence, the stock material will be primarily used) by the objective function, which will be described next. Hence, if $d_i + u_i \ge h_i$, then $s_i^{old} = 0$, i.e., all material from stock will be depleted. If $d_i + u_i < h_i$, then $s_i^{old} = h_i - d_i - u_i$.

\textcolor{black}{Batches of material $i$ with different expiration are modeled using the objective function, and this will be described in the next section.}

\subsection{Objective Function}

\textcolor{black}{The meat processing company defined five different aspects of optimization and requested to assume it as a compound objective with the possibility to adjust the emphasis on individual components by weights, e.g., using a set of predefined weights with a possibility to ignore selected aspects (for more details see Section~\ref{sec:weights}). Hence, the optimization objective we consider is a weighted sum of five functions:}
\begin{equation}
\min w_0 \cdot f_0 + w_1 \cdot f_1 + w_2 \cdot f_2 + w_3 \cdot f_3 + w_4 \cdot f_4,
\end{equation}
where $w_0, w_1, ..., w_4$ are the weights associated with the functions. The individual functions are the following:
\begin{description}
    \item[$f_0$] nominal cost of materials to be bought, i.e., $\sum_{i \in M} c_i \cdot b_i$
    \item[$f_1$] nominal cost of materials to be put in stock, i.e., $\sum_{i \in M} c_i \cdot s_i$
    \item[$f_2$] the materials of low stock turnover are preferred not to remain to be put in stock, which can be formalized as minimizing $\sum_{i \in M} e^{-trn_i} \cdot s_i$, where $trn_i$ denotes the stock turnover of material $i \in M$, specified by the quantity used per period of time
    \item[$f_3$] the materials of short shelf life are preferred not to be overproduced, which can be formalized as minimizing $\sum_{i \in M} e^{-exp_i} \cdot s_i^{new}$, where $exp_i$ denotes the shelf life of newly produced material $i \in M$
    \item[$f_4$] from the materials already stored, the materials of upcoming expiration are used primarily, which is elaborated further 
\end{description}


There may be multiple batches of each material in stock, each with a different expiration date. Therefore, we are also given, for each stored material $i$, a list of quantities $m_{i,1}, \dots, m_{i,k_i}$ and a list of remaining shelf lives $exp_{i,1}, \dots, exp_{i,k_i}$.
We need $f_4$ to incur a penalty according to the expiration such that the oldest batch is taken first and the newest batch last. Let us first sort and reindex the batches such that $exp_{i,1} > exp_{i,2} > \dots > exp_{i,k_i}$, i.e., from the newest to the oldest one.

An illustration of the dependence of the penalty on the quantity of material $i$ to be put in stock again is depicted in Figure~\ref{pwl}.
It can be immediately seen that it leads to the concept of a piecewise linear function (PWL), which is piecewise increasing on the interval from 0 to $h_i$.

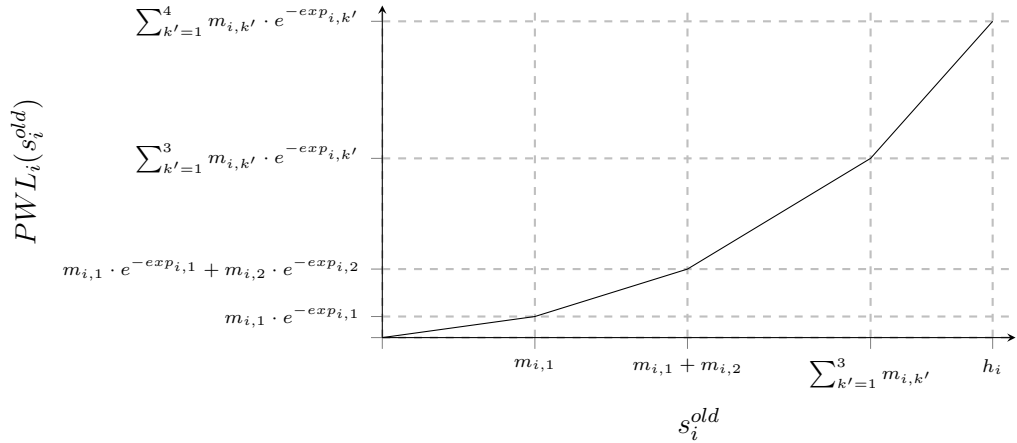
\begin{figure*}[t]
\centering
\begin{tikzpicture}[scale=1, draw=black, text=black]
\begin{axis}[
  axis x line=bottom,
  axis y line=left,
    width=10cm,
    height=6cm,
    	xlabel={$s_i^{old}$},
    	ylabel={$PWL_i(s_i^{old})$},
     xmajorgrids={true},
     ymajorgrids={true},
	major grid style={dashed, thick},
	tick align=outside,
	ymin=0,
	ymax=3.15,
	xmin=0,
	xmax=4.15,
	ytick={0,0.2,0.65,1.7,3},
	xtick={0,1,2,3.2,4},
	yticklabels={~,$m_{i,1}\cdot e^{-exp_{i,1}}$,$m_{i,1}\cdot e^{-exp_{i,1}} + m_{i,2}\cdot e^{-exp_{i,2}}$,$\sum_{k' = 1}^{3} m_{i,k'}\cdot e^{-exp_{i,k'}}$,$\sum_{k' = 1}^{4} m_{i,k'}\cdot e^{-exp_{i,k'}}$},
	xticklabels={~,$m_{i,1}$,$m_{i,1}+m_{i,2}$,$\sum_{k' = 1}^{3} m_{i,k'}$,$h_i$},
	x tick label style={font=\scriptsize},
	y tick label style={font=\scriptsize}]
	
     \draw plot coordinates {(0,0) (1,0.2) (2,0.65) (3.2,1.7) (4,3)};
\end{axis}
\end{tikzpicture}
\caption{Piece-wise linear function, an illustration for four segments.}
\label{pwl}
\end{figure*}

The PWL function is constructed as follows. If a solver provides a modeling interface for PWL, we need a list of x- and y-coordinates where the slope of the function changes. We start with the origin, i.e., $x_{i,0} = 0$ and $y_{i,0} = 0$, and the other coordinates, for each $k \in \{1, \dots, k_i\}$, are computed as follows:
\begin{equation}\label{prusx}
x_{i,k} = \sum_{k' = 1}^{k} m_{i,k'}
\end{equation}
\begin{equation}\label{prusy}
y_{i,k} = \sum_{k' = 1}^{k} m_{i,k'}\cdot e^{-exp_{i,k'}}
\end{equation}

With such PWL defined, the overall optimization objective is as follows:
\begin{equation}
\min \sum_{i \in M} w_0 \cdot c_i \cdot b_i + w_1 \cdot c_i \cdot s_i + w_2 \cdot e^{-trn_i} \cdot s_i 
+ w_3 \cdot e^{-exp_i} \cdot s_i^{new} + w_4 \cdot PWL_i(s_i^{old})
\label{eq:PWLObjective}
\end{equation}

If the solver does not provide the modeling interface for PWL, we transform it to LP as follows. We need to express each part of the function in the form of $ax + b$ and then add constraint $ax + b \le t$, where $t$ is an auxiliary variable that is added to the minimization objective. It can be seen that for a fixed $x$, it holds in an optimal solution that $t = ax + b$.

In our case, for material $i$ and $k$-th segment of the PWL function, the slope of the function is $e^{-exp_{i,k}}$. Variable $x$ represents the remaining material from the already stored in stock, which is $s_i^{old}$. Hence, the function can be expressed as:
\begin{equation}
e^{-exp_{i,k}} \cdot s_i^{old} + \overline{y}_{i,k}
\end{equation}
where
\begin{equation}\label{prusecik}
\overline{y}_{i,k} = y_{i,k} - e^{-exp_{i,k}} \cdot x_{i,k}
\end{equation}

We then need to add to our model the following constraint, $\forall i \in M, \forall k \in \{1, \dots, k_i\}$:
\begin{equation}\label{pwlelim}
e^{-exp_{i,k}} \cdot s_i^{old} + \overline{y}_{i,k} \le r_i
\end{equation}
where $\overline{y}_{i,k}$ is calculated from equation (\ref{prusecik}) and $r_i \in \mathbb{R}^+_0$ is the auxiliary variable that we add to the objective function. Applying (\ref{prusx}), (\ref{prusy}), and (\ref{prusecik}) to constraint (\ref{pwlelim}), we get, $\forall i \in M, \forall k \in \{1, \dots, k_i\}$:
\begin{equation}
e^{-exp_{i,k}} \cdot s_i^{old} - r_i \le e^{-exp_{i,k}} \cdot \sum_{k' = 1}^{k} m_{i,k'} - \sum_{k' = 1}^{k} m_{i,k'}\cdot e^{-exp_{i,k'}}
\end{equation}

Hence, the optimization objective is:
\begin{equation}
\min \sum_{i \in M} w_0 \cdot c_i \cdot b_i + w_1 \cdot c_i \cdot s_i + w_2 \cdot e^{-trn_i} \cdot s_i + w_3 \cdot e^{-exp_i} \cdot s_i^{new} + w_4 \cdot r_i
\end{equation}

Since the exponential function plummets very rapidly, we divide the parameters $exp_i$ and $trn_i$ by 5000 to keep the numbers in the model in reasonable ranges. The number 5000 was found to be suitable by empirical observation. The choice of weights $w_0, \dots, w_4$ will be explained in Section~\ref{sec:weights}.

\subsection{Extra Requirements}

It remains to resolve two extra requirements, due to which the model is no longer on continuous variables. The first requirement is MOQ. We are given, $\forall i \in M$, the minimum required quantity $b^{min}_i$ of product $i$ stating that the quantity of material $i$ to be bought is either 0 or at least $b^{min}_i$. In other words, if we decide to buy product $i$, i.e., $b_i > 0$, then we must buy it in the quantity of at least $b^{min}_i$, i.e., $b_i \ge b^{min}_i$. To achieve this, we add an auxiliary binary variable $v_i$ and the following constraint, $\forall i \in M$:

\begin{align}\label{constrMinBuy}
&b_i \le 0 + v_i \cdot \mathcal{M}\\
&b_i \ge b^{min}_i - (1-v_i)\cdot \mathcal{M}\\
&v_i \in \{0, 1\}
\end{align}

where $\mathcal{M}$ is some sufficiently large constant, thanks to which one of the constraints is always 'switched off' according to the value of $v_i$.

The next requirement is MPA, which states that each material in a group of alternatives must participate by at least 5 \%, or not at all, i.e., the ratio $\hat{z}_{j,A,i}$ to $z_j$ must not be between 0 and 0.05. Hence, $\forall (j,A,q_{j,A}) \in Alts, i \in A$:

\begin{align}\label{constrMinProb}
&\hat{z}_{j,A,i} \le 0 + \hat{v}_{j,i} \cdot \mathcal{M}\\
&\hat{z}_{j,A,i} \ge 0.05 \cdot z_j - (1-\hat{v}_{j,i}) \cdot \mathcal{M}\\
&\hat{v}_{j,i} \in \{0, 1\}
\end{align}

Notice the difference that MOQ defines absolute minimum as $b^{min}_i$ is a constant parameter, whereas MPA defines relative minimum in a group as $z_j$ is a variable.


When we summarize the variables and constraints introduced above, the complete model looks as follows: 

\vspace{0.8em}
\noindent
\textbf{Minimize:}
\begin{equation}
\sum_{i \in M} w_0 \cdot c_i \cdot b_i + w_1 \cdot c_i \cdot s_i + w_2 \cdot e^{-trn_i} \cdot s_i + w_3 \cdot e^{-exp_i} \cdot s_i^{new} + w_4 \cdot r_i \label{con:lpstart}
\end{equation}
\textbf{Subject to:}
\begin{align}
d_i + s_i + u_i = b_i + h_i + p_i \quad 
& \forall i \in M\\
p_i = \sum_{j \in J | i \in M^-_j} z_j \cdot q^-_{j,index^-(i)} \quad
& \forall i \in M\\
u_i = \sum_{j \in J | i \in M^+_j} z_j \cdot q^+_{j,index^+(i)} + \sum_{(j,A,q_{j,A}) \in Alts | i \in A} \hat{z}_{j,A,i} \cdot q_{j,A} \quad
& \forall i \in M\\
z_j = \sum_{i\in A} \hat{z}_{j,A,i} \quad & \forall (j,A,q_{j,A}) \in Alts\\
s_i = s_i^{new} + s_i^{old} \quad
& \forall i \in M\\
h_i - d_i - u_i \le s_i^{old} \quad
& \forall i \in M\\
s_i^{old} \le h_i \quad
& \forall i \in M\\
e^{-exp_{i,k}} \cdot s_i^{old} - r_i \le e^{-exp_{i,k}} \cdot \sum_{k' = 1}^{k} m_{i,k'} - \sum_{k' = 1}^{k} m_{i,k'}\cdot e^{-exp_{i,k'}} \quad \label{con:lpend}
& \forall i \in M, \forall k \in \{1, \dots, k_i\}\\
b_i \le 0 + v_i \cdot \mathcal{M} \quad \label{con:furbmina}
& \forall i \in M\\
b_i \ge b^{min}_i - (1-v_i)\cdot \mathcal{M} \quad \label{con:furbminb}
& \forall i \in M\\
\hat{z}_{j,A,i} \le 0 + \hat{v}_{j,i} \cdot \mathcal{M} \quad \label{con:furpsta}
& \forall (j,A,q_{j,A}) \in Alts, i \in A\\
\hat{z}_{j,A,i} \ge 0.05 \cdot z_j - (1-\hat{v}_{j,i}) \cdot \mathcal{M} \quad \label{con:furpstb}
& \forall (j,A,q_{j,A}) \in Alts, i \in A\\
b_i, s_i, s_i^{new}, s_i^{old}, p_i, u_i, r_i \in \mathbb{R}^+_0 \quad \label{con:lpvara}
& \forall i \in M\\
z_j \in \mathbb{R}^+_0 \quad
& \forall j \in J\\
\hat{z}_{j,A,i} \in \mathbb{R}^+_0 \quad \label{con:lpvarb}
& \forall (j,A,q_{j,A}) \in Alts, i \in A\\
v_i \in \{0, 1\} \quad \label{con:furbminc}
& \forall i \in M\\
\hat{v}_{j,i} \in \{0, 1\} \quad \label{con:furpstc}
& \forall (j,A,q_{j,A}) \in Alts, i \in A
\end{align}

Variables $s_i$, $u_i$, and $p_i$ do not need to be used and serve solely for clarity. 

\subsection{Algorithmic Design}
\label{sec:algorithm}

The disjunctive constraints realized by binary variables and big-$\mathcal{M}$ constants make the model more computationally difficult to solve.
Besides, due to the large ranges of values in the data, the model \textcolor{black}{often leads to numerical issues on data from real production.}
For these reasons, we propose the following solution approach based on iterative constraint generation \textcolor{black}{that is safer regarding numerical issues and allows solving real-world instances using a non-commercial ILP solver.}

We initially \textcolor{black}{consider a~subproblem of PPOP by taking into account only constraints} (\ref{con:lpstart})--(\ref{con:lpend}) and (\ref{con:lpvara})--(\ref{con:lpvarb}). \textcolor{black}{Since we have only linear constraints, we construct a~LP model for this subproblem and invoke the solver. Since this is a~LP formulation, we get an~optimal solution} \textcolor{black}{in polynomial time.}

After finding the optimal solution to the LP model \textcolor{black}{for the subproblem, we check whether the solution is feasible for the original full problem by checking the constraints not covered by the LP formulation. Specifically,} we check whether the extra requirements MOQ and MPA are satisfied. That is, we verify that for all materials, their quantity to be bought is either 0 or at least $b^{min}_i$. If not, we add constraints (\ref{con:furbmina}), (\ref{con:furbminb}), and (\ref{con:furbminc}), $\forall i \in M ~|~ b_i \in (0,b^{min}_i)$. 
Furthermore, if for some group of alternatives $(j,A,q_{j,A})$ and an alternative material $i \in A$, it holds that the ratio $\hat{z}_{j,A,i}$ to $z_j$ is between 0 and 5 percent, then we add constraints (\ref{con:furpsta}), (\ref{con:furpstb}), and (\ref{con:furpstc}), $\forall (j,A,q_{j,A}) \in Alts, i \in A ~|~ \frac{\hat{z}_{j,A,i}}{z_j} \in (0,0.05)$.

After this construction, we invoke the solver again to find an optimal solution to the new ILP model. After getting the optimal solution, we again verify the extra requirements MOQ and MPA, and this process is iteratively repeated until we get a solution that satisfies all the \textcolor{black}{MOQ and MPA} requirements.

We will refer to this approach as the \textit{iterative model}, while the complete ILP model containing all the constraints (\ref{con:lpstart})--(\ref{con:furpstc}) from the beginning will be referred to as the \textit{global model}. A comparison of these two approaches to solving the PPOP is provided in Section~\ref{sec:scalability}.

\section{Complexity}\label{sec:complexity}

In this section, we show that the extra requirements MOQ and MPA make the problem hard.  More precisely, each of the requirements itself puts the time complexity of the problem to the class of \mbox{$\mathcal{NP}$-complete} problems.

\newtheorem{proposition}{Proposition}
\begin{proposition}\label{prop1}
The PPOP with the MOQ requirement but without the MPA requirement is \mbox{$\mathcal{NP}$-complete}.
\end{proposition}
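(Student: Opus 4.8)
The plan is to prove membership in $\mathcal{NP}$ and then $\mathcal{NP}$-hardness by a polynomial reduction from \textsc{Subset Sum}. For membership, I would take the decision version asking whether a feasible solution of objective value at most a given bound $B$ exists; a certificate is an assignment to all variables, and since fixing the binaries $v_i$ leaves a linear program whose optimal vertex has polynomially bounded encoding, feasibility of the linear constraints together with the disjunctions~(\ref{con:furbmina})--(\ref{con:furbminb}) and the objective bound can all be verified in polynomial time.

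For hardness, given a \textsc{Subset Sum} instance with positive integers $a_1,\dots,a_n$ and target $T$ (where w.l.o.g. $\sum_i a_i \ge T$), I would build the following instance. There is one producible final material $G$ with demand $d_G = T$ and cost $c_G = 1$. For each index $i$ I introduce a bought material $B_i$ with uniform MOQ $b^{min}_{B_i} = 1$, cost $c_{B_i} = 1$ and empty stock, together with a ``limiter'' material $A_i$ whose initial stock is $h_{A_i} = 1$ and whose cost is $0$. A single recipe $j_i$ consumes one unit of $A_i$ and one unit of $B_i$ and outputs $a_i$ units of $G$. The weights are set to $w_0 = 0$, $w_1 = 1$, $w_2 = w_3 = w_4 = 0$, so the objective reduces to $\sum_{i\in M} c_i s_i = s_G + \sum_i s_{B_i}$, and the question is whether a solution of objective value at most $0$ exists.

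The crux of the construction --- and the step I expect to be the main obstacle --- is forcing each recipe level $z_{j_i}$ to be binary, because nothing in the model bounds a bought quantity from above, so the MOQ alone only yields $z_{j_i}\in\{0\}\cup[1,\infty)$. I would resolve this with the limiter $A_i$: since each application of $j_i$ consumes one unit of $A_i$, and $A_i$ is available only in its initial stock $h_{A_i}=1$, the material-flow constraint forces $z_{j_i}=1-s_{A_i}\le 1$. To keep $A_i$ unbuyable, as required by the convention that a material may be purchased only if no recipe produces it, I let $A_i$ be the nominal output of a dummy recipe whose only input is a material fixed to quantity zero; thus $A_i$ is never actually produced and its available amount stays exactly $h_{A_i}=1$. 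Combining the upper bound $z_{j_i}\le 1$ with the MOQ then pins $z_{j_i}$ to $\{0,1\}$ whenever $s_{B_i}=0$, while any fractional $z_{j_i}\in(0,1)$ necessarily forces $s_{B_i}\ge 1-z_{j_i}>0$ through $b_{B_i}=z_{j_i}+s_{B_i}\ge b^{min}_{B_i}=1$, which is penalized. Note that routing the penalty through the leftover $s_{B_i}$ rather than through the purchase itself (i.e.\ taking $w_0=0$) is what prevents the subset \emph{size} from leaking into the objective.

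Finally I would verify the equivalence. An objective value of $0$ forces $s_{B_i}=0$ for all $i$ and $s_G=0$; the former makes every $z_{j_i}\in\{0,1\}$, and the latter makes $\sum_i a_i z_{j_i}=d_G=T$ (the flow constraint for $G$ gives $s_G=p_G-T\ge 0$, so demand must be met exactly), whence $S=\{i: z_{j_i}=1\}$ is a subset summing to exactly $T$. Conversely, any subset $S$ with $\sum_{i\in S} a_i = T$ yields the solution $z_{j_i}=1,\,b_{B_i}=1$ for $i\in S$ and $z_{j_i}=b_{B_i}=0$ otherwise, which satisfies the demand for $G$ with no leftover and attains objective $0$. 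Hence \textsc{Subset Sum} is a yes-instance iff the constructed PPOP instance admits a solution of value at most $0$; since the reduction is polynomial and uses the single constant MOQ value $1$ for every material, it also establishes hardness in the restricted case of a constant MOQ, which together with the $\mathcal{NP}$ membership argument completes the proof.
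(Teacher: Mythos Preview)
Your proof is correct but takes a genuinely different route from the paper. The paper reduces from \textsc{Independent Set}: for each vertex it creates a buyable material and a recipe, with a second layer of ``edge'' recipes whose limited stock $h_{\widetilde{ij}}\in\{0.5,1\}$ encodes adjacency; the objective uses only $w_0=1$ (purchase cost), and the question is whether cost exactly $100k$ is attainable. Your reduction from \textsc{Subset Sum} is structurally simpler---a single layer of recipes $j_i$ with output quantities $a_i$---and your choice $w_0=0$, $w_1=1$ is a nice touch, since penalising leftover stock rather than purchase cost keeps the target value independent of the subset size. The limiter gadget $A_i$ with $h_{A_i}=1$ plays the same role as the paper's $\widetilde{i}$ with $h_{\widetilde{i}}=1$: both cap the corresponding recipe at level~$1$ so that MOQ forces integrality.

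Two remarks. First, the paper's reduction uses only polynomially bounded numbers and therefore yields \emph{strong} $\mathcal{NP}$-hardness, whereas your construction inherits the (possibly exponential) magnitudes $a_i$ from \textsc{Subset Sum} and thus proves only weak $\mathcal{NP}$-hardness; this is sufficient for the proposition as stated, but worth noting. Second, your device for making $A_i$ unbuyable---a dummy recipe whose input is ``a material fixed to quantity zero''---is a little informal; you should spell out one concrete realisation (e.g., a single auxiliary material $D$ with $h_D=0$ that is both input and output of a self-loop recipe, so $D$ is unbuyable and any recipe consuming $D$ is forced to level~$0$). With that detail made explicit, the argument is complete.
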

\begin{proof}

The problem is clearly in $\mathcal{NP}$ as the feasibility of a solution and its objective value can be verified in polynomial time.
We show \mbox{$\mathcal{NP}$-hardness} by a polynomial transformation from the independent set problem, known to be \mbox{$\mathcal{NP}$-complete}.

An instance of the independent set problem is given by a graph $G=(V,E)$ and a positive integer number $k \le |V|$, and the question is whether there is a subset of nodes $S \subseteq V$ of size $k$ such that no two vertices in $S$ are connected by an edge. Given an instance of the independent set problem, we construct a corresponding instance of the PPOP as follows. An example reduction for a graph with vertices $V=\{1,2,3\}$ and edges $E=\{\{1,2\},\{2,3\}\}$ is depicted in Figure~\ref{proof}.

\begin{figure*}[t]
\centering
\newcommand{\mat}[3] { 
 	\node[circle, draw, thick, minimum size=0.8cm] (m#1) at (#2,#3) {$#1$};
}
\newcommand{\matcs}[4] { 
 	\node[circle, draw, thick, minimum size=0.8cm] (mc#1) at (#2,#3) {$\widetilde{#1}$};
 	\node[font=\scriptsize] (d#1) at (#2+0.75,#3-1.1) {$h_{\widetilde{#1}}=#4$};
}
\newcommand{\kus}[3] { 
 	\node[draw, thick, minimum size=0.8cm] (k#1) at (#2,#3) {$#1$};
}

\subfloat[An example instance of the independent set problem.]{%
\begin{tikzpicture}[scale=0.65, draw=black, text=black]
\mat{1}{-5}{0}
\mat{2}{0}{-3}
\mat{3}{5}{0}
\draw[thick] (m1) -- (m2);
\draw[thick] (m2) -- (m3);
\end{tikzpicture}
\label{proof1}}
\hfil
\vspace{1em}
\subfloat[Corresponding instance of the PPOP. The circles are materials, the squares are recipes. The arrows depict the flows of materials and their quantities specified by recipes.]{%
\begin{tikzpicture}[scale=0.65, draw=black, text=black]
\mat{1}{-6.5}{0}
\matcs{1}{-3.5}{0}{1}
\kus{1}{-5}{-3}
\mat{2}{-1.5}{0}
\matcs{2}{1.5}{0}{1}
\kus{2}{0}{-3}
\mat{3}{3.5}{0}
\matcs{3}{6.5}{0}{1}
\kus{3}{5}{-3}

\mat{12}{-6.5}{-6}
\matcs{12}{-3.5}{-6}{0.5}
\kus{12}{-5}{-9}
\mat{13}{-1.5}{-6}
\matcs{13}{1.5}{-6}{1}
\kus{13}{0}{-9}
\mat{23}{3.5}{-6}
\matcs{23}{6.5}{-6}{0.5}
\kus{23}{5}{-9}

\mat{t}{0}{-12}

\draw[-stealth, thick] (m1) -- node[anchor=east] {$100$} (k1);
\draw[-stealth, thick] (mc1) -- node[anchor=west] {$1$} (k1);
\draw[-stealth, thick] (m2) -- node[anchor=east] {$100$} (k2);
\draw[-stealth, thick] (mc2) -- node[anchor=west] {$1$} (k2);
\draw[-stealth, thick] (m3) -- node[anchor=east] {$100$} (k3);
\draw[-stealth, thick] (mc3) -- node[anchor=west] {$1$} (k3);

\draw[-stealth, thick] (m12) -- node[anchor=east] {$100$} (k12);
\draw[-stealth, thick] (mc12) -- node[anchor=west] {$1$} (k12);
\draw[-stealth, thick] (m13) -- node[anchor=east] {$100$} (k13);
\draw[-stealth, thick] (mc13) -- node[anchor=west] {$1$} (k13);
\draw[-stealth, thick] (m23) -- node[anchor=east] {$100$} (k23);
\draw[-stealth, thick] (mc23) -- node[anchor=west] {$1$} (k23);

\draw[-stealth, thick] (k1) -- node[anchor=east] {$50$} (m12);
\draw[-stealth, thick] (k1) -- node[anchor=south, pos=0.3] {~$50$} (m13);
\draw[-stealth, thick] (k2) -- node[anchor=south, pos=0.2] {$50$} (m12);
\draw[-stealth, thick] (k2) -- node[anchor=south, pos=0.3] {~$50$} (m23);
\draw[-stealth, thick] (k3) -- node[anchor=south, pos=0.2] {$50$} (m13);
\draw[-stealth, thick] (k3) -- node[anchor=west] {$50$} (m23);

\draw[-stealth, thick] (k12) -- node[anchor=north east] {$100$} (mt);
\draw[-stealth, thick] (k13) -- node[anchor=west] {$100$} (mt);
\draw[-stealth, thick] (k23) -- node[anchor=north west] {$100$} (mt);

\mat{i}{10}{-8}
\kus{j}{10}{-10}
\mat{i'}{10}{-12}
\draw[-stealth, thick] (mi) -- node[anchor=west] {$q^+_{j,index^+(i)}$} (kj);
\draw[-stealth, thick] (kj) -- node[anchor=west] {$q^-_{j,index^-(i')}$} (mi');

\end{tikzpicture}
\label{proof2}}

\caption{Diagram for the reduction.}
\label{proof}
\end{figure*}

We set weights in the objective function as $w_0 = 1$, and $w_1 = w_2 = w_3 = w_4 = 0$, i.e., we penalize only the cost of materials to be bought. Next, we can consider $b^{min}_i$ to be an arbitrary constant equal for all materials. Without loss of generality, we will consider $b^{min}_i$ to be 100 for all the introduced materials. This choice also affects some other numbers in the reduction, as will be evident further.

For each $i \in V$, we create materials $i$ and $\widetilde{i}$, such that $c_{i} = 1$, $c_{\widetilde{i}} > k$, $h_{i} = 0$, $h_{\widetilde{i}} = 1$.
For each pair of vertices $i,j \in V, i<j$, we create materials $ij$ and $\widetilde{ij}$, such that $c_{\widetilde{ij}} > k$ and $h_{ij} = 0$. 
If $\{i,j\} \notin E$, we set $h_{\widetilde{ij}} = 1$, and if $\{i,j\} \in E$, we set $h_{\widetilde{ij}} = 0.5$.

Next, we create the set of recipes as follows. For each $i \in V$, we create a recipe $i$ with the input materials $M^+_{i} = \left( i, \widetilde{i}\right)$, their quantities $Q^+_{i} = \left( 100, 1\right)$, the output materials $M^-_{i} = \left( ji | j < i\right) \cup \left( ij | i < j\right)$, and their quantities $Q^-_{i} = \left( \frac{100}{n-1}, \frac{100}{n-1}, \dots, \frac{100}{n-1}\right)$, where $n=|V|$. 

Note that the values for $c_{ij}$ can be arbitrary as each material $ij$ is an output of some recipe and hence cannot be bought (recall that a material can be bought only if it cannot be produced by any recipe).

Next, for each pair of vertices $i,j \in V, i<j$, we create a recipe $ij$ with the input materials $M^+_{ij} = \left( ij, \widetilde{ij}\right)$, their quantities $Q^+_{ij} = \left( 2\frac{100}{n-1}, 1\right)$, the output material $M^-_{ij} = \left( t\right)$, and its quantity $Q^-_{ij} = \left( 2\frac{100}{n-1}\right)$. We set $d_{t} = 100k$.

The construction is done such that there is an independent set $S$ of size $k$ in the original graph if and only if there is a solution to the PPOP of cost $100k$.
If there is a solution of cost $100k$, then each recipe $i$ for which $z_i = 1$ determines node $i$ to be selected in $S$; other recipes have $z_i = 0$ and corresponding nodes are not selected in $S$.

'$\Leftarrow$': Suppose there is a solution to the PPOP of cost $100k$. We first show that there are exactly $k$ recipes from the set of recipes $\{1, 2, \dots, n\}$ for which $z_i = 1$, and for the other $n-k$ recipes it holds that $z_i = 0$, thereby fulfilling the requirement on the size of $S$ to be $k$.
If $z_i = 1$, then $b_i = 100$, and if $z_i = 0$, then $b_i = 0$. 
If $z_i > 1$ and hence $b_i > 100$, the available material $\widetilde{i}$ from stock would not suffice and we would need to buy it, in the minimum quantity of 100, which would increase the objective function by more than $100k$ because $c_{\widetilde{i}} > k$. 
To satisfy demand $d_{t} = 100k$, there must be at least $k$ recipes for which $z_i = 1$ because only these recipes for which $z_i = 1$ contribute to the production of material $t$ by 100 units.
If, for some $i$, it held that $z_i > 0$ and $z_i < 1$, then, in order to satisfy demand, there would have to be at least $k+1$ such recipes for which $z_i > 0$ and hence at least $k+1$ materials for which $b_i \ge 100$ due to MOQ, which would in turn raise the objective function to at least $100(k+1)$. Therefore, since the solution is of cost $100k$, there are exactly $k$ such recipes for which $z_i = 1$. 

The requirement that the nodes in $S$ must not be connected is ensured as follows. For each edge $\{i,j\} \in E$, we have $h_{\widetilde{ij}} = 0.5$, which ensures that $z_{ij} \le 0.5$ and hence either $z_i = 0$ or $z_j = 0$ (or both $z_i = z_j = 0$), therefore, at most one of the nodes $i$ and $j$ can be in the independent set $S$. Otherwise, having $h_{\widetilde{ij}} = 1$ if $\{i,j\} \notin E$ allows $z_{ij} = 1$ and hence both $z_i$ and $z_j$ can be equal to $1$, therefore, both nodes $i$ and $j$ are allowed to be in $S$. This works because otherwise, we would need to buy material $\widetilde{ij}$, which would again increase the objective function by more than $100k$ as $c_{\widetilde{ij}} > k$, and because materials $ij$ cannot be stored as it would fail to satisfy demand $d_{t} = 100k$.

'$\Rightarrow$': Suppose there is an independent set $S$ of size $k$. By similar reasoning, setting $z_i = 1$ for each node $i \in S$ and $z_i = 0$ for each node $i \notin S$ yields a solution to the PPOP of cost $100k$.

The reduction is polynomial as the created instance of the problem with MOQ is of size $\Theta(n^2)$. 

\end{proof}

Notice that the reduction in the proof fixes the parameter $b^{min}_i$ to a constant that is equal for all materials, and no group of alternatives is used. Therefore, we obtain the corollary that the problem remains \mbox{$\mathcal{NP}$-complete} even for constant $b^{min}_i$ and without the concept of alternatives.

\begin{proposition}\label{prop2}
The PPOP with the MPA requirement but without the MOQ requirement is \mbox{$\mathcal{NP}$-complete}.
\end{proposition}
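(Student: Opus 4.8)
The plan is to prove the two parts separately. Membership in $\mathcal{NP}$ is immediate and identical to Proposition~\ref{prop1}: given a candidate assignment of all decision variables, one checks the material-flow, stock, and alternative equalities together with the MPA disjunction $\hat{z}_{j,A,i} \in \{0\} \cup [0.05\,z_j,\,z_j]$ and evaluates the objective, all in polynomial time. The substance is the $\mathcal{NP}$-hardness, which I would again establish by a polynomial transformation from the independent set problem, mirroring the construction of Proposition~\ref{prop1} but replacing the MOQ-based ``semi-continuous buying'' gadget with an MPA-based ``semi-continuous proportion'' gadget.

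The heart of the reduction is a per-vertex binary gadget that turns the relative MPA threshold into a clean $\{0,1\}$ selection variable. For each vertex $i$ I would introduce a recipe $P_i$ whose input is drawn from a two-element group of alternatives $A_i=\{x_i,\phi_i\}$, where $x_i$ is a scarce ``gate'' material and $\phi_i$ a freely available filler, with group quantity $q_{j,A}$ chosen so that one full unit of $x_i$ corresponds to the proportion $0.05$. I would force $z_{P_i}=1$ (by letting $P_i$ be the unique producer of a by-product with unit demand) and cap the supply of $x_i$ so that its usage, hence the proportion $\hat{z}_{P_i,A_i,x_i}$, can never exceed $0.05$. The MPA disjunction then forbids every value in $(0,0.05)$, so $\hat{z}_{P_i,A_i,x_i}\in\{0,0.05\}$: vertex $i$ is ``selected'' exactly when its gate is saturated. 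This is the direct analogue of how MOQ pinned $b_i$ to $\{0\}\cup[b^{min}_i,\infty)$ in Proposition~\ref{prop1}, the only difference being that the threshold is now relative to the variable $z_{P_i}$ rather than an absolute constant.

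Independence would be enforced in the same spirit as the edge stocks $h_{\widetilde{ij}}=0.5$ in Proposition~\ref{prop1}: I would let the gate material $x_i$ be produced only by a recipe $X_i$ that consumes one unit of a shared edge-resource for each edge incident to $i$, giving every edge $e=\{i,j\}$ a stock of one unit common to $X_i$ and $X_j$. Saturating both gates at the endpoints of $e$ would demand two units from a stock of one, so at most one endpoint of each edge can be selected. Finally, to require at least $k$ selections I would have each $X_i$ co-produce one unit of a reward material whose demand equals $k$ (or, equivalently, drive the count through the objective with $w_1=1$ and all other weights zero, asking whether the optimum leftover stock drops to its minimal value). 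Both directions of the equivalence then follow as in Proposition~\ref{prop1}, and the instance has $\Theta(|V|+|E|)$ materials and recipes, so the reduction is polynomial.

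The step I expect to be the main obstacle is the same soundness argument that dominates Proposition~\ref{prop1}: ruling out fractional solutions. Because MPA only constrains the alternative proportions $\hat{z}_{j,A,i}$ and not the recipe levels $z_{X_i}$ directly, a fractional assignment of the $z_{X_i}$ could \emph{a priori} satisfy the edge inequalities $z_{X_i}+z_{X_j}\le 1$ and the reward demand at the same time --- this is precisely the LP relaxation of independent set, whose optimum may strictly exceed the integral one (e.g.\ on odd cycles). The decisive part of the proof is therefore to show, through the stock-cost terms, that any such non-integral solution is strictly more expensive than a genuine size-$k$ independent set, so that the decision threshold on the objective is met if and only if the graph admits an independent set of size $k$. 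This mirrors the ``$z_i\in(0,1)$ would force at least $k+1$ active recipes'' accounting in Proposition~\ref{prop1}, and it is where the numerical values of the gadget must be chosen with care.
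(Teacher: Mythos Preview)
Your plan is sound in spirit but takes a longer road than the paper, and the detour is exactly the part you flag as the main obstacle. The paper does \emph{not} rebuild the edge machinery; it keeps the entire Proposition~\ref{prop1} construction (including the edge stocks $h_{\widetilde{ij}}\in\{0.5,1\}$ and the terminal demand $d_t=100k$) and replaces only the per-vertex ``semi-continuous'' gadget. Concretely, for each vertex $i$ the paper puts the gate material $i$ \emph{in stock} ($h_i=100$) with a large cost $c_i$, sets $w_0=w_1=1$, and adds a sink recipe $\hat{i}$ whose input is a single group of alternatives $A=(i,\widetilde{i})$ with $q_{\hat{i},A}=2000$ (so that $100$ is exactly $5\%$). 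Because leaving any of the $100$ units of $i$ in stock is too expensive, they must all go either to the original cutting recipe $i$ (forcing $z_i=1$) or to the sink $\hat{i}$; MPA forbids splitting, since a positive amount below $100$ in $\hat{i}$ would fall strictly between $0$ and $5\%$. Hence $z_i\in\{0,1\}$ is forced \emph{directly}, after which the Proposition~\ref{prop1} argument applies verbatim.

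The practical difference is that by having the gate material come from stock rather than from a production recipe $X_i$, the paper never introduces the auxiliary continuous variable $z_{X_i}$ and therefore never faces the fractional-LP-of-independent-set issue you identify as the crux. In your construction the MPA constraint binarises $\hat z_{P_i,A_i,x_i}$, but the recipe level $z_{X_i}$ remains free on $[0,1]$ and must be disciplined indirectly through stock penalties on $x_i$; this can be made to work, but you then also have to control the leftover of the shared edge-resources (which are not outputs of any recipe and are therefore buyable in PPOP, so a high $c_{\text{edge}}$ simultaneously blocks buying and penalises leftover---two effects you want to decouple). None of this is fatal, but it requires extra gadgetry that the paper's stock-based trick avoids entirely. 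If you want to streamline your proof, drop the production recipe $X_i$ and the new edge mechanism: put $x_i$ in stock, force its full consumption via $w_1$, and let MPA on a two-alternative sink recipe do the binarisation---then reuse Proposition~\ref{prop1}'s edge stocks unchanged.
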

\begin{proof}
To ease the explanation, we describe how to modify the reduction from Proposition~\ref{prop1} to use MPA instead of MOQ.
First, notice that for materials $\widetilde{i}$ and $\widetilde{ij}$, the MOQ requirements are not necessary. We can set prices $c_{\widetilde{i}}$ and $c_{\widetilde{ij}}$ to be large enough so that buying even a small quantity would exceed the permitted objective value. It remains to show how to replace MOQ for materials $i$ such that the flow of material $i$ to recipe $i$ is still either 0 or 100. 

For each triplet of materials $i$, $\widetilde{i}$, and recipe $i$, we replace it with another material flow structure illustrated in Figure~\ref{proofmpa}, which is constructed as follows. There are materials $i$, $\widetilde{i}$, and $\bar{i}$, with $c_i$ being large enough so that material $i$ cannot be bought, $h_i = 100$, $c_{\widetilde{i}} = 0$, $h_{\widetilde{i}} = 0$, $h_{\bar{i}} = 0$,  and $d_{\bar{i}} = 1$. Next, there are recipes $i$ and $\hat{i}$ defined as follows. For recipe $i$, $M^+_{i} = \left( i\right)$, $Q^+_{i} = \left(100\right)$, and the definitions of $M^-_{i}$ and $Q^-_{i}$ stay the same as in Proposition~\ref{prop1}. For recipe $\hat{i}$, $M^+_{\hat{i}} = \left(\right)$, $Q^+_{\hat{i}} = \left(\right)$, $M^-_{\hat{i}} = \left(\bar{i}\right)$, $Q^-_{\hat{i}} = \left(1\right)$, and the alternatives are defined as $A = (i, \widetilde{i})$, $q_{\hat{i}, A} = 2000$, and $(\hat{i}, A, q_{\hat{i}, A}) \in Alts$. Note that the number 2000 is chosen because 100 is 5 \% of 2000.

We need to ensure that material $i$ always flows in the quantity of 100 to either recipe $i$ or $\hat{i}$. To achieve this, we modify the objective weights such that $w_0 = w_1 = 1$, and $w_2 = w_3 = w_4 = 0$, and hence material $i$ cannot stay unused in stock. Note that material $\widetilde{i}$ is of cost $c_{\widetilde{i}} = 0$ so it can be bought in any quantity. To satisfy demand $d_{\bar{i}} = 1$, there are only two possible scenarios. One scenario is that material $i$ goes to recipe $i$ in the quantity of 100 and material $\widetilde{i}$ goes to recipe $\hat{i}$ in the quantity of 2000. The other scenario is that material $i$ in the quantity of 100 goes to recipe $\hat{i}$ and then material $\widetilde{i}$ goes to recipe $\hat{i}$ only in the quantity of 1900, therefore, material $i$ makes exactly 5~\% in the group of alternatives. Indeed, material $i$ cannot go to recipe $i$ in the quantity more than 0 and less than 100, since the remaining quantity from stock would need to go to recipe $\hat{i}$, which would then make more than 0 but less than 5~\% in the group of alternatives, and, finally, material $i$ cannot go to recipe $i$ in the quantity more than 100 because it is too expensive to buy. 
The rest of the proof from Proposition~\ref{prop1} applies without any modifications.

\begin{figure*}[t]
\centering
\newcommand{\mat}[3] { 
 	\node[circle, draw, thick, minimum size=0.8cm] (m#1) at (#2,#3) {$#1$};
}
\newcommand{\matcs}[4] { 
 	\node[circle, draw, thick, minimum size=0.8cm] (mc#1) at (#2,#3) {$\widetilde{#1}$};
 	\node[font=\scriptsize] (d#1) at (#2+0.75,#3-1) {$h_{\widetilde{#1}}=#4$};
}
\newcommand{\kus}[3] { 
 	\node[draw, thick, minimum size=0.8cm] (k#1) at (#2,#3) {$#1$};
}

\begin{tikzpicture}[scale=0.65, draw=black, text=black]
\mat{i}{-7.5}{0}
\matcs{i}{-4.5}{0}{1}
\kus{i}{-6}{-3}

\draw[-stealth, thick] (mi) -- node[anchor=east] {$100$} (ki);
\draw[-stealth, thick] (mci) -- node[anchor=west] {$1$} (ki);

\draw[dotted, thick] (-3,0.5) -- (-3,-6.8);

\node[circle, draw, thick, minimum size=0.8cm] (a) at (-0.5,0) {$i$};
\node[font=\scriptsize] (aa) at (-0.5-1.25,-0.9) {$h_{i}=100$};
\node[draw, thick, minimum size=0.8cm] (b) at (-0.5,-3) {$i$};
\node[circle, draw, thick, minimum size=0.8cm] (c) at (5.5,0) {$\widetilde{i}$};
\node[font=\scriptsize] (cc) at (5.5+0.75,-1.0) {$c_{\widetilde{i}}=0$};
\node[draw, thick, minimum size=0.8cm] (d) at (2.5,-3) {$\hat{i}$};
\node[circle, draw, thick, minimum size=0.8cm] (e) at (2.5,-6) {$\bar{i}$};
\node[font=\scriptsize] (ee) at (2.5+0.75,-6-0.9) {$d_{\bar{i}}=1$};

\draw[-stealth, thick] (a) -- node[anchor=east] {$100$} (b);
\draw[-stealth, dashed] (a) -- (d);
\draw[-stealth, dashed] (c) -- (d);
\draw[-stealth, thick] (d) -- node[anchor=east] {$1$} (e);

\node[] (f) at (2.5,-1.5) {$2000$};
\draw[dashed] (1.5,-2.5) to [controls=+(90:1) and +(90:1)] (3.5,-2.5);

\end{tikzpicture}
\caption{Modification of the reduction for the problem with MPA. The part on the left side of the figure is to be replaced by the part on the right side of the figure. The group of alternatives is depicted by dashed lines.}
\label{proofmpa}
\end{figure*}
\end{proof}

\section{Experiments}\label{sec:experiments}

We implemented our proposed solution in Python with Gurobi Optimizer version 10.0.0 build v10.0.0rc2 (win64) as the solver for the mathematical model, using the PWL concept (see Equation (\ref{eq:PWLObjective})). \textcolor{black}{Some experiments were carried out using an open-source mixed-integer program solver CBC version 2.10.10, as this is the solver used in the algorithm currently deployed in the test operation at the meat processing company.} We conducted benchmarks on a Dell PC with an Intel\textregistered~Core\texttrademark~i7-4610M processor running at 3.00 GHz with 16 GB of RAM.

\textcolor{black}{We performed the experimental analysis on real data from the meat processing company. The basic set of recipes consists of 507 materials, 246 recipes, and non-zero demands for 272 materials. The biggest demand is 18125, the smallest non-zero demand is 0.745, and the average demand is 127.229. The stock is also taken from the data of the company, containing 42 different materials in a total quantity of 211069. The biggest number of distinct expiration dates of one material is 4.
For the scalability experiments (Section~\ref{sec:scalability}), we assume an extended set of recipes consisting of 1130 materials and 1131 recipes. This dataset also contains some additional recipes that the company did not use when we were conducting experiments. Nevertheless, it makes it even more challenging to solve and indicates how complex scenarios the company may solve.
}

Below, we analyze the objective weights and their impact on the solution. The subsequent subsection shows an analysis allowing the company to determine the number of hogs to be slaughtered. The next subsection analyses the impact of the MOQ requirement on the solution runtime. The last subsection studies the scalability of the algorithm using a larger dataset with randomly generated demands.




\subsection{Objective Weights}
\label{sec:weights}

We evaluated several combinations of objective weights and their impact on the objective values. Since we deem functions $f_0$ and $f_1$ more significant for the benefit of the company, we intentionally choose larger values for these two functions. \textcolor{black}{Functions $f_2$, $f_3$, and $f_4$ are auxiliary, and their uses depend on the actual need of the purchasing manager.} For now, $b^{min}_i$ is set to 0, $\forall i \in M$.

The results are shown in Table~\ref{tbl:weights}. The column $w$ is the vector of weights in the compound objective function. For each $l \in \{0,1,2,3,4\}$, $f_l$ are the values of the function $f_l$ evaluated alone (without the weights) and $t_l$ are the relative deteriorations in the value of $f_l$ against the best possible value (in percentage). More precisely, if $f^{*}_l$ is the best possible value of $f_l$ that would be attained when optimizing $f_l$ alone, the value of $t_l$ is calculated as $t_l = 100\cdot(\frac{f_l}{f^{*}_l} - 1)$. Note that the value of $f^{*}_l$ is the value $f_l$ on the row in the table where $w_l = 1$ and the other weights are 0.

\begin{table}[t]
\tiny
  \begin{center}
    \caption{Experimental evaluation of the impact of the objective weights}
    \label{tbl:weights}
    \pgfplotstabletypeset[
create on use/w/.style={
    create col/assign/.code={%
        \edef\entry{$(\thisrow{w0},\thisrow{w1},\thisrow{w2},\thisrow{w3},\thisrow{w4})$}
        \pgfkeyslet{/pgfplots/table/create col/next content}{\entry}
    }
},
      columns={w,f0,f1,f2,f3,f4,t0,t1,t2,t3,t4},
      multicolumn names,
      every head row/.style={before row={\toprule}, after row=\midrule},
      col sep=comma,
      display columns/0/.style={string type, column name=$w$, column type=c},
      display columns/1/.style={column name=$f_0$, column type=r, sci, sci zerofill},
      display columns/2/.style={column name=$f_1$, column type=r, sci, sci zerofill},
      display columns/3/.style={column name=$f_2$, column type=r, sci, sci zerofill},
      display columns/4/.style={column name=$f_3$, column type=r, sci, sci zerofill},
      display columns/5/.style={column name=$f_4$, column type=r, sci, sci zerofill},
      display columns/6/.style={column name=$t_0$, column type=r, fixed, fixed zerofill},
      display columns/7/.style={column name=$t_1$, column type=r, fixed, fixed zerofill},
      display columns/8/.style={column name=$t_2$, column type=r, fixed, fixed zerofill},
      display columns/9/.style={column name=$t_3$, column type=r, fixed, fixed zerofill},
      display columns/10/.style={column name=$t_4$, column type=r, fixed, fixed zerofill},
      every last row/.style={after row=\bottomrule},
    ]{BM.csv}
  \end{center}
\end{table}

Relative deteriorations $t_l$ in Table~\ref{tbl:weights} show that weights $(1, 1, 1, 1, 1)$, $(10, 10, 1, 1, 1)$, and $(100, 100, 1, 1, 1)$ provide a good compromise concerning the best possible $f_l$. Other scenarios favor some $f_l$ at the expense of another. 
Regarding the fact that functions $f_0$ and $f_1$ are denominated in fiat currency, whereas the other functions are numbers reflecting the quality of the solution or planners preferences (such as freshness of the products) that will not be immediately projected to financial benefits, we consider for further experiments the combination of weights $w = (100, 100, 1, 1, 1)$. 
\textcolor{black}{Nevertheless, in the real implementation, the company has several predefined sets of weights designed based on the results presented in Table~\ref{tbl:weights}. For example, the manager can selectively turn on and off auxiliary objective functions $f_2$, $f_3$, and $f_4$. However, $f_0$ and $f_1$ are always the main objectives with the highest weight as both define immediate costs. Which set of weights is used reflects the actual buying strategy of the company and the actual state of the stock.}

\subsection{Required Number of Hogs to be Cut}
\label{sec:numberOfHogs}

\textcolor{black}{The purchasing manager must also determine the amount of material coming from slaughter and purchase. Sometimes, the company even} has to decide to buy and slaughter more hogs than the optimal number, i.e., when it would be financially beneficial to buy (and resell) more components and slaughter less or even not at all. The motivation is to maintain the brand and reputation of the company as a full-fledged meat processing company that involves slaughtering, as well as to keep the number of employees stable. \textcolor{black}{Nevertheless, the company needs to understand the impact of this decision on its profit.} Therefore, we evaluate the impact of the required number of hogs to be slaughtered on the solution quality \textcolor{black}{as it is one of the key decisions for the company.}

This analysis is realized by adding a constraint ensuring that the corresponding variable $z_j$ is of the requested value. 
For now, $b^{min}_i$ is still set to 0, $\forall i \in M$, and the vector of weights used is $w = (100, 100, 1, 1, 1)$.
The results are depicted in Table~\ref{tbl:cuts}. The column 'hogs cut' is the number of hogs required to be bought, slaughtered, and processed.
The dependence of the individual objective functions on the number of cut hogs is depicted in Figure~\ref{fig:BMmcut}.

Function $f_4$ is constant, meaning that the number of hogs cut does not directly affect what is processed from stock. Otherwise, we can observe that too low numbers of cut hogs are detrimental since in order to satisfy demand, we need to buy other expensive materials. As the number of hogs increases, it also increases the flexibility of production, meaning that more effective and cheaper ways of satisfying demand can be found. From a certain point, all functions (except for $f_4$) are linearly increasing as every extra hog only raises overproduction. Interestingly, the threshold from which the functions start increasing is larger for functions $f_2$ and $f_3$ than for functions $f_0$ and $f_1$. If the number of cut hogs is not enforced by the constraint, the optimum number of cut hogs, for given objective weights, is 689.75.

All instances were solved in one iteration and negligible runtime, i.e., less than one second, using the commercial ILP solver. Note that for less than 400 cut hogs, the model becomes infeasible as the demand cannot be satisfied.

\begin{table*}[t]
\footnotesize
  \begin{center}
    \caption{Experimental evaluation of the impact of the required quantity of hogs to be cut}
    \label{tbl:cuts}
    \pgfplotstabletypeset[
      columns={minCut,f0,f1,f2,f3,f4,t0,t1,t2,t3,t4},
      multicolumn names,
      every head row/.style={before row={\toprule}, after row=\midrule},
      col sep=comma,
      display columns/0/.style={column name=hogs cut, column type=r},
      display columns/1/.style={column name=$f_0$, column type=r, sci, sci zerofill},
      display columns/2/.style={column name=$f_1$, column type=r, sci, sci zerofill},
      display columns/3/.style={column name=$f_2$, column type=r, sci, sci zerofill},
      display columns/4/.style={column name=$f_3$, column type=r, sci, sci zerofill},
      display columns/5/.style={column name=$f_4$, column type=r, sci, sci zerofill},
      display columns/6/.style={column name=$t_0$, column type=r, fixed, fixed zerofill},
      display columns/7/.style={column name=$t_1$, column type=r, fixed, fixed zerofill},
      display columns/8/.style={column name=$t_2$, column type=r, fixed, fixed zerofill},
      display columns/9/.style={column name=$t_3$, column type=r, fixed, fixed zerofill},
      display columns/10/.style={column name=$t_4$, column type=r, fixed, fixed zerofill},
      every last row/.style={after row=\bottomrule},
    ]{BMmcut3.csv}
  \end{center}
\end{table*}

\begin{figure*}[t]
\centering
\begin{tikzpicture}[draw=black, text=black]
\begin{axis}[
    scale only axis,
    ylabel={cost [currency]},
    xlabel={hogs [-]},
    axis y line*=left,
    width=0.8\textwidth,
    height=\axisdefaultheight,
    enlargelimits=false,
    ]
\addplot[solid,color=red,mark=*] table [x=minCut, y=f0, col sep=comma] {BMmcut3.csv}; \label{f0}
\addplot[densely dotted,color=orange,mark=square*] table [x=minCut, y=f1, col sep=comma] {BMmcut3.csv}; \label{f1}
\end{axis}
\begin{axis}[
    scale only axis,
    ylabel={cost [-]},
    axis y line*=right,
    axis x line=none,
    width=0.8\textwidth,
    height=\axisdefaultheight,
    enlargelimits=false,
    legend style={ at={(1.05,1)}, anchor=north west,align=left},
    ]
    \addlegendimage{/pgfplots/refstyle=f0}\addlegendentry{$f_0$}
    \addlegendimage{/pgfplots/refstyle=f1}\addlegendentry{$f_1$}
\addplot[densely dashed,color=blue,mark=diamond*] table [x=minCut, y=f2, col sep=comma] {BMmcut3.csv}; \addlegendentry{$f_2$}
\addplot[loosely dashed,color=violet,mark=triangle*] table [x=minCut, y=f3, col sep=comma] {BMmcut3.csv}; \addlegendentry{$f_3$}
\addplot[dashdotdotted,color=green,mark=star] table [x=minCut, y=f4, col sep=comma] {BMmcut3.csv}; \addlegendentry{$f_4$}
\end{axis}
\end{tikzpicture}
\caption{Dependence of the costs on the required number of cut hogs. The y-axis on the left is for $f_0$ and $f_1$, on the right for $f_2$, $f_3$, and $f_4$.}
\label{fig:BMmcut}
\end{figure*}
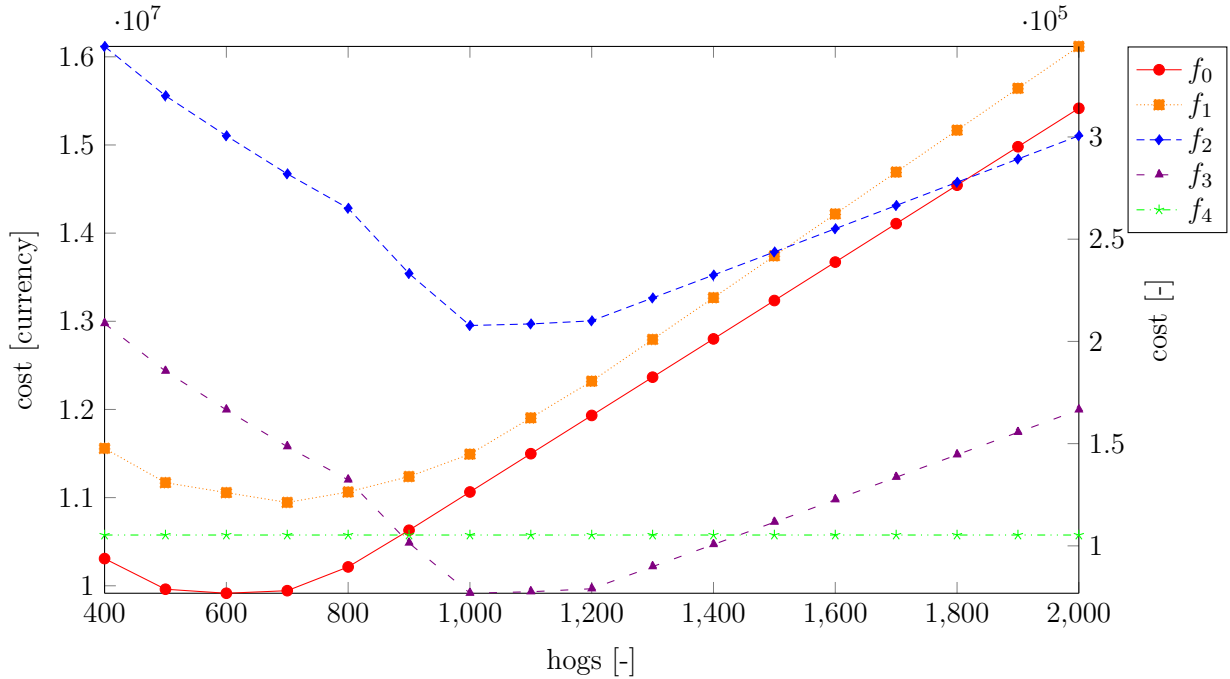

\subsection{Scalability}
\label{sec:scalability}
\textcolor{black}{To assess the scalability and compare the iterative model to the global model, we perform the experimental benchmarking on the extended set of recipes as it indicates how complex scenarios the company may solve in the future.}
\textcolor{black}{Since randomly creating more materials and recipes would likely carry the data further away from real-life needs, we investigate the scalability in terms of (i) the minimum order quantities $b^{min}_i$, and (ii) the number of different demands, i.e., the number of nonzero $d_i$.}

\textcolor{black}{Unlike the previous experiments, we focus on the use of the open-source mixed-integer program solver CBC to show that it is not necessary to invest in an expensive commercial solver. As will be shown below, the iterative model exploiting an open-source solver can provide a sufficiently fast solution to the user. Based on the requirement of the end users from the meat processing company, the time limit to solve a single instance was set to 60 seconds.
}

\subsubsection{Minimum Order Quantity}

We first assess the effect of MOQ on the CPU time, solution quality, and the number of iterations and added constraints the iterative model needs. For now, the required number of hogs to be cut is retracted, and the vector of weights used is again $w = (100, 100, 1, 1, 1)$.

\textcolor{black}{
The results are depicted in Table~\ref{tbl:moq}. The column 'moq' shows the value of $b^{min}_i$ that is set equal for all materials $i \in M$. Columns $CPUtime_{global}^{Gurobi}$ and $CPUtime_{global}^{CBC}$, respectively, denote the time Gurobi and CBC solvers need to solve the instance using the global model. Column $optimal_{global}^{CBC}$ indicates by 1 if the solution found by the CBC solver was optimal, and 0 indicates the timeout of the solver. The results obtained by the iterative model are summarized by columns $CPUtime_{iter}^{CBC}$ denoting the runtime of the algorithm, $optimal_{iter}^{CBC}$ indicates whether the algorithm found an optimal solution, $\#iter$ is the number of algorithm iterations, and $\#consB$ and $\#consP$ are numbers of MOQ and MPA constraints added by the algorithm, respectively.
}

\textcolor{black}{
The table shows that Gurobi can solve the global model very quickly, while CBC timeouts six times out of eight. Nevertheless, the iterative model with CBC always found the optimal solution before the timeout. The worst-case runtime was less than twenty seconds, which is much less than what is demanded by users of the algorithm. One can notice from the table that the number of added constraints and the number of iterations are rising with increasing $b^{min}_i$.
}

\begin{table*}[b]
\tiny
  \begin{center}
    \caption{Experimental evaluation of the impact of MOQ}
    \label{tbl:moq}
    \pgfplotstabletypeset[
      columns={moq,runtimeGurobi,runtimeCBC,statusCBC,runtimeCBCiter,statusCBCiter,iterations,consB,consP},
      multicolumn names,
      every head row/.style={before row={\toprule}, after row=\midrule},
      col sep=comma,
      display columns/0/.style={column name=moq, column type=r, column type/.add={}{|}},
      display columns/1/.style={column name=$CPUtime_{global}^{Gurobi}$, column type=r},
      display columns/2/.style={column name=$CPUtime_{global}^{CBC}$, column type=r},
      display columns/3/.style={column name=$optimal_{global}^{CBC}$, column type=r, column type/.add={}{|}},
      display columns/4/.style={column name=$CPUtime_{iter}^{CBC}$, column type=r, precision=1, fixed, fixed zerofill},
      display columns/5/.style={column name=$optimal_{iter}^{CBC}$, column type=r},
      display columns/6/.style={column name=$\#iter$, column type=r},
      display columns/7/.style={column name=$\#consB$, column type=r},
      display columns/8/.style={column name=$\#consP$, column type=r},
      every last row/.style={after row=\bottomrule},
    ]{BMmoqCBCKompletkus.csv}
  \end{center}
\end{table*}

\subsubsection{Demand}
\label{sec:scalability_demand}


Next, we investigate the scalability in terms of the number of different demands, i.e., the number of nonzero $d_i$.
To make sure that the random demands are of a reasonable magnitude, we first performed an auxiliary run of the problem instance where we enforced all recipes and all materials in alternatives to be used in non-zero quantity. That is, we set $z_j \ge 0.01, \forall j \in J$, and $\hat{z}_{j,A,i} \ge 0.05 \cdot z_j, \forall (j,A,q_{j,A}) \in Alts, \forall i \in M$. From the solution, we stored the value of $s^{new}_i$, $\forall i \in M$, as the value $\widetilde{d_i}$, which will be used for generating random demand for material $i$.

In this set of benchmarks, we initially set all demands to zero and then we iteratively add demand for each material. Each such added demand is a randomly generated float number following a triangular distribution $\sim Triang(a, b, c)$, which is a continuous probability distribution with lower limit $a$, upper limit $b$, and mode $c$ (i.e., it gives a random floating point number within a range with a bias towards one extreme). The parameter $a$ is $0.1 \cdot \widetilde{d_i}$, $b$ is $5 \cdot \widetilde{d_i}$, and $c$ is $\widetilde{d_i}$.
Finally, we set $b^{min}_i$ to 100, $\forall i \in M$, and the vector of weights used is again $w = (100, 100, 1, 1, 1)$. 

\textcolor{black}{The result for the iterative model is shown in Table~\ref{tbl:BMscal}, which compares results obtained by the global model using Gurobi and CBC, and the iterative model exploiting CBC. Each row in the table refers to 40 instances having nonzero demand from the specified range (one instance for each value in the range). The values in the table are average values over 40 instances. The columns have the same meaning as in Table~\ref {tbl:moq} except column $optimal$ that in Table~\ref{tbl:BMscal} indicates the percentage of instances solved to optimality. It comes as no surprise that the commercial solver can solve all instances formulated using the global model very quickly, while the open-source solver often struggles to find the optimal solution before the timeout sixty seconds (see column $optimal_{global}^{CBC}$). On the other hand, the iterative model exploiting CBC can solve all instances to optimality within a few seconds (see columns $CPUtime_{iter}^{CBC}$ and $optimal_{iter}^{CBC}$). The average number of iterations the algorithm needs varies between 3 and 10.2 while the average number of added constraints of both types never exceeds 43.8. From the table, it can be seen that the number of iterations (column $\#iter$) is very closely related to the runtime (column $CPUtime_{iter}^{CBC}$).}

\begin{table*}[t]
\tiny
  \begin{center}
    \caption{Dependence of the runtime on the number of demands.}
    \label{tbl:BMscal}
    \pgfplotstabletypeset[
      columns={demand,runtimeGurobi,runtimeCBC,statusCBC,runtimeCBCiter,statusCBCiter,iterations,consB,consP},
      multicolumn names,
      every head row/.style={before row={\toprule}, after row=\midrule},
      col sep=comma,
      display columns/0/.style={column name=demand, column type=r, string type, column type/.add={}{|}},
      display columns/1/.style={column name=$CPUtime_{global}^{Gurobi}$, column type=r},
      display columns/2/.style={column name=$CPUtime_{global}^{CBC}$, column type=r, precision=1, fixed, fixed zerofill},
      display columns/3/.style={column name=$optimal_{global}^{CBC}$, column type=r, column type/.add={}{|}},
      display columns/4/.style={column name=$CPUtime_{iter}^{CBC}$, column type=r, precision=1, fixed, fixed zerofill},
      display columns/5/.style={column name=$optimal_{iter}^{CBC}$, column type=r},
      display columns/6/.style={column name=$\#iter$, column type=r},
      display columns/7/.style={column name=$\#consB$, column type=r},
      display columns/8/.style={column name=$\#consP$, column type=r},
      every last row/.style={after row=\bottomrule},
    ]{BMscalCBC.csv}
  \end{center}
\end{table*}


\textcolor{black}{
Nevertheless, the data from the company lead to very large ranges of values in the matrix coefficients, thereby causing numerical instability in Gurobi solver.} This is where the iterative model has a significant advantage. Since the global model introduces constraints (\ref{con:furbmina})--(\ref{con:furbminb}) also for materials that are purchased in large amounts, the constant $\mathcal{M}$ needs to be of very large value. Hence, to make the global model work correctly, we needed to set the value of $\mathcal{M}$ to $1\cdot 10^7$ in constraints (\ref{con:furbmina})--(\ref{con:furbminb}).
This, in turn, means that we also need to set the precision parameters of Gurobi ILP solver, such as \texttt{IntFeasTol}, to minimum possible values (which is $1\cdot 10^{-9}$), and \texttt{NumericFocus} to maximum value (which is 3), otherwise the solver fails to find a solution and incorrectly deems the instances infeasible. \textcolor{black}{On the other hand, this phenomenon was not observed with the iterative model, and the model works with the default parameters of the solvers.}

\section{Conclusion}\label{sec:conclusion}

The resilience of the meat production sector has been heavily tested in recent years by increasing prices of inputs and energy and fluctuating demand. In addition, due to EU regulations, meat production needs to be more efficient and environmentally friendly. Therefore, meat production companies must search for ways to react to this. This paper studies how the PPOP can improve production efficiency and increase the ability of production planners to adapt to fluctuating demand and other requests from production, e.g., the actual material availability in stock. The optimization problem described in this paper is derived from a real production process in the meat processing company. We proved the \mbox{$\mathcal{NP}$-completeness} of the problem, and we designed an ILP model integrating new constraints. The ILP model is solved using the iterative constraint generation approach, which allows solving the problem using an open-source ILP solver and positively impacts the numerical stability of the ILP solver. We performed an experimental analysis on real production data that showed the efficiency of our proposed algorithm as well as the effect of varying objective weights and extra requirements on the solution quality. The developed solution allows the company to optimize expenses related to the material used in production and analyze the optimal number of hogs to be cut. Furthermore, since the proposed algorithm can quickly solve the problem, the production planners can promptly adapt to changes in the demand for production or other disruptions. In future research, we want to concentrate on demand uncertainty. Specifically, we want to reduce the uncertainty budget by analyzing the demand for individual materials and the relation among them.

\section*{Acknowledgements}

This work was supported by the EU and the Ministry of Industry and Trade of the Czech Republic under the Project OP PIK CZ.01.1.02/0.0/0.0/20\_321/0024630, by the Grant Agency of the Czech Republic under the Project GACR 22-31670S, and co-funded by the European Union under the project ROBOPROX - Robotics and Advanced Industrial Production (reg. no. CZ.02.01.01/00/22\_008/0004590).


\bibliographystyle{itor}
\bibliography{ref}

\end{document}